\newtheorem{theorem}{Theorem}
\newtheorem{lemma}[theorem]{Lemma} 
\newtheorem{remark}[theorem]{Remark}
\newtheorem{corollary}[theorem]{Corollary}
\newcommand\E{\mathbb{E}}
\newcommand\R{\mathbb{R}}
\newcommand\dx{\mathrm{d}}
\theoremstyle{plain}
\DeclareMathOperator*{\argmax}{arg\,max}
\DeclareMathOperator*{\argmin}{arg\,min}
\newcommand{\tT}{\mathrm{T}}
\title{Conditional Generative Models are Provably Robust: Pointwise Guarantees for Bayesian Inverse Problems}
\author{\name Fabian Altekr\"uger \email fabian.altekrueger@hu-berlin.de \\
\addr Department of Mathematics \\
Humboldt-Universit\"at zu Berlin
\AND
\name Paul Hagemann \email hagemann@math.tu-berlin.de \\
\addr Institute of Mathematics \\
Technische Universit\"at Berlin
\AND
\name Gabriele Steidl \email steidl@math.tu-berlin.de \\
\addr Institute of Mathematics \\
Technische Universit\"at Berlin
}
\begin{document}

\maketitle

\begin{abstract}
    Conditional generative models became a very powerful tool to sample from Bayesian inverse problem posteriors. 
     It is well-known in classical Bayesian literature that posterior measures are quite robust with respect to perturbations of both the prior measure and the negative log-likelihood, which includes perturbations of the observations. However, to the best of our knowledge, the robustness of conditional generative models with respect to perturbations of the observations has not been investigated yet. In this paper, we prove for the first time that appropriately learned conditional generative models provide robust results for single observations.
\end{abstract}

\section{Introduction}

Recently, the use of neural networks (NN) in the field of uncertainty quantification has emerged, as one often is interested in the statistics of a solution and not just in point estimates. In particular, Bayesian approaches in inverse problems received great interest.
In this paper, we are interested
in learning the whole posterior distribution in Bayesian inverse problems by conditional generative NNs as proposed, e.g., in \citep{adler_deep,ardizzone2019guided, cond_score, Hagemann_2022}. Addressing the posterior measure instead of end-to-end reconstructions has several advantages as illustrated
in Figure~\ref{fig:posterior_density}. More precisely, if we consider a Gaussian mixture model as prior distribution and a linear forward operator with additive Gaussian noise, then the posterior density (red) can be computed explicitly. Obviously, these curves change smoothly with respect to the observation $y$, i.e., we observe a continuous behaviour of the posterior also with respect to observations near zero. In particular, (samples of) the posterior can be used to provide additional 
information on the reconstructed data, for example on their uncertainty.
As can be seen in the figure, for fixed $y$ the minimum mean squared error (MMSE) estimator
delivers just an averaged value. In contrast to the non robust maximum a-posteriori (MAP) estimator, this is not the one with the highest probability. Even worse, the MMSE can output values which are not even in the prior distribution. For a more comprehensive comparison of MAP estimator, MMSE estimator and posterior distribution, we refer to Appendix~\ref{app:example}

Several robustness guarantees on the posterior were proved in the literature. One of the first results in the direction of stability with respect to the distance of observations was obtained by \citet{stuart_2010} with respect to the Hellinger distance, see also \citet{DS2017}. 
A very related question instead of perturbed observations concerns the approximations of forward maps, which was investigated by \citet{MX2009}. Furthermore, different prior measures were considered in \citep{Hosseini2017, HN2017,Sullivan_2017}, where they also discuss the general case in Banach spaces. Two recent works \citep{latz_wellposedness,Sprungk_2020} investigated the (Lipschitz) continuity of the posterior measures with respect to a multitude of metrics, where \cite{latz_wellposedness} focused on the well-posedness of the Bayesian inverse problem and \cite{Sprungk_2020}  on the local Lipschitz continuity. 
Most recently, in \citet{garbunoinigo2023bayesian} the stability estimates have been generalized to integral probability metrics circumventing some Lipschitz conditions done in \citet{Sprungk_2020}.
Our paper is based on the findings of \citet{Sprungk_2020}, but relates them with conditional generative NNs that aim to learn the posterior.

\begin{figure*}
\centering
\begin{subfigure}{.245\textwidth}
  \includegraphics[width=\linewidth]{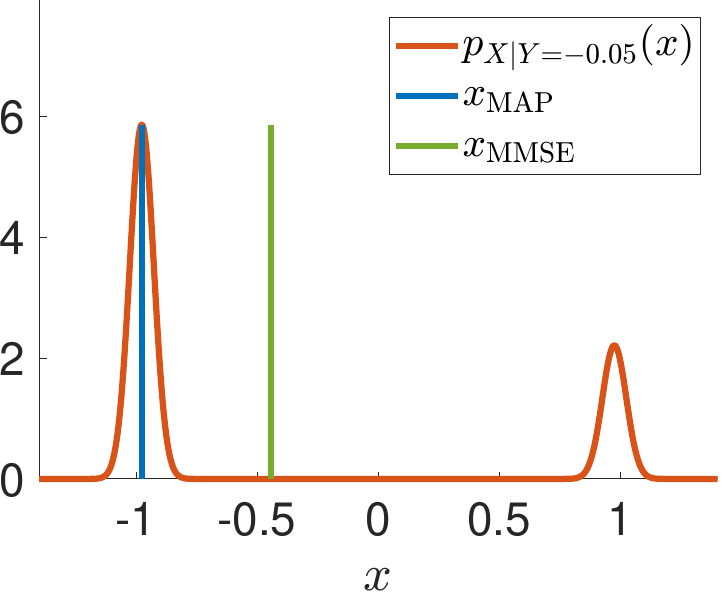}
  \caption*{$y=-0.05$}
\end{subfigure}%
\hfill
\begin{subfigure}{.245\textwidth}
  \includegraphics[width=\linewidth]{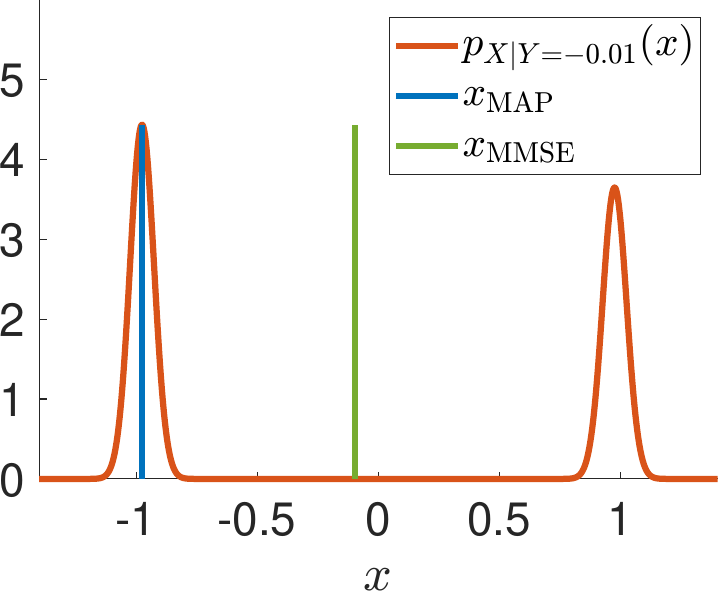}
    \caption*{$y=-0.01$}
\end{subfigure}%
\hfill
\begin{subfigure}{.245\textwidth}
  \includegraphics[width=\linewidth]{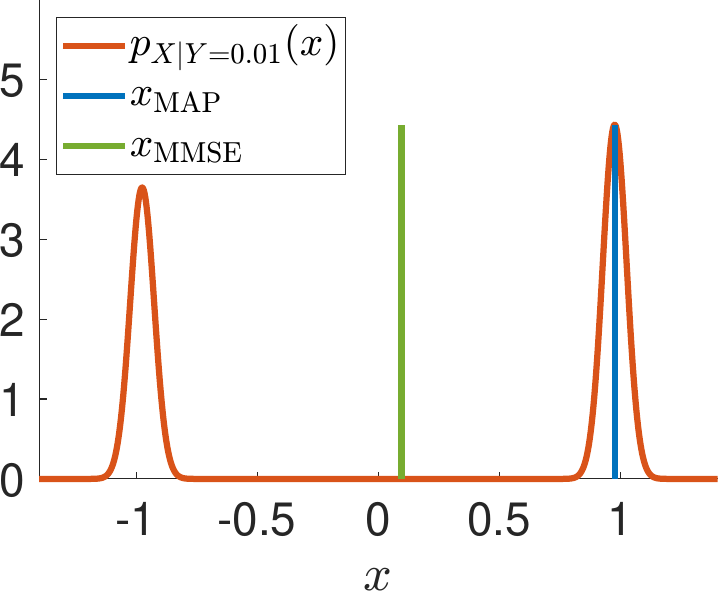}
    \caption*{$y=0.01$}
\end{subfigure}%
\hfill
\begin{subfigure}{.245\textwidth}
  \includegraphics[width=\linewidth]{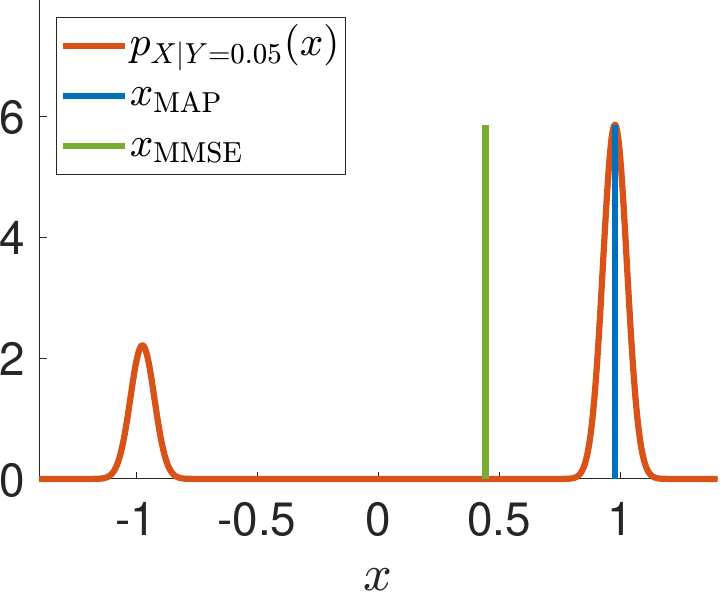}
    \caption*{$y=0.05$}
\end{subfigure}%
\caption{Posterior density (red), MAP estimator (blue) and MMSE estimator (green) for different observations $y=-0.05,-0.01,0.01,0.05$ 
(from left to right). While the MAP estimator is discontinuous with respect to the observation $y$,  the posterior density 
is continuous with respect to y. 
The MMSE estimator just gives the expectation value of the posterior which is, in contrast to MAP, not the value with highest probability.
} \label{fig:posterior_density}
\end{figure*}

More precisely, in many machine learning papers, 
the following idea is pursued in order to solve inverse problems simultaneously for all observations $y$: 
Consider a family of generative models $G_\theta(y,\cdot)$ with parameters $\theta$, 
which are supposed to map a latent distribution, like the standard Gaussian one, to the absolutely continuous posteriors 
$P_{X|Y=y}$, i.e., $G_{\theta}(y,\cdot)_{\#} P_Z \approx P_{X|Y=y}$. 
In order to learn such a conditional generative model, usually a loss of the form 
$$
L(\theta) \coloneqq \mathbb{E}_{y \sim P_Y} [D (P_{X|Y=y}, G_\theta(y,\cdot)_{\#} P_Z)]
$$
is chosen with some \say{distance} $D$ between measures 
like the Kullback-Leibler (KL) divergence $D = \text{KL}$ used in \citet{ardizzone2019guided} for training normalizing flows
or the Wasserstein-1 distance $D = W_1$ appearing, e.g., in the framework of (conditional) Wasserstein generative adversarial networks (GANs) \citep{adler_deep,arjovsky2017wasserstein, liu2021wasserstein}. Also conditional diffusion models \citep{eq_diff,song2021scorebased,tashiro2021csdi} fit into this framework. Here \cite{debortoli2022convergence}
showed that the standard score matching diffusion loss also optimizes the Wasserstein distance between the target and predicted distribution. 

However, in practice we are usually interested in the reconstruction quality from a single or just a few measurements
which are null sets with respect to $P_Y$. 
In this paper, we are interested in the important question, whether there exist any guarantees for the NN output to
be close to the posterior for one specific measurement $\tilde{y}$.
Our main result in Theorem \ref{thm:posterior_bound} 
shows that for a NN learned such that the loss becomes small in the Wasserstein-1 distance, say
$L(\theta) < \varepsilon$, 
the distance $W_1(P_{X|Y=\tilde y}, G_\theta(\tilde y,\cdot)_{\#} P_Z)$ becomes also small for the single observation $\tilde y$.
More precisely, we get the bound
$$W_1(P_{X|Y=\tilde y}, G_\theta(\tilde y,\cdot)_{\#} P_Z) \le C \varepsilon^\frac{1}{n+1},$$ 
where $C$ is a constant and $n$ is the dimension of the observations.
To the best of our knowledge, this is the first estimate given in this direction.

We like to mention that in contrast to our paper, 
where we assume that samples are taken from the distribution for which the NN was learned,
\citet{robust_cond_ood} observed that conditional normalizing flows 
are unstable when feeding them out-of-distribution observations. 
This is not too surprising given some literature on the instability of (conditional) normalizing flows \citep{understanding_and_mitigating,norm_flows_ood}.

\paragraph{Outline of the paper.}
The main theorem is shown in Section~\ref{sec:main}.
For this we introduce several lemmata for the local Lipschitz continuity of posterior measures and conditional generative models with respect to the Wasserstein distance. 
In Section~\ref{sec:cond_gen_models}, we discuss the dependence of our derived bound on the training loss for different conditional generative models.
In Appendix~\ref{app:example},
we illustrate by a simple example with a Gaussian mixture prior and Gaussian noise,
why posterior distributions can be expected to be more stable than maximum a-posteriori (MAP) estimations and have more desirable properties than minimum mean squared error (MMSE) estimations.

\section{Pointwise Robustness of Conditional Generative NNs} \label{sec:main}
Let $X \in \R^m$ be a continuous random variable with law $P_X$  determined by its density function $p_X$
and $f \colon \mathbb R^m \rightarrow \mathbb R^n$ a measurable function.
We consider a Bayesian inverse problem 
\begin{align} \label{inverse_prob}
Y = \mathrm{noisy} (f(X))  
\end{align}
where "noisy" describes the underlying noise model. A typical choice is additive Gaussian noise, resulting in
\begin{align}
Y = f(X) + \Xi, \quad \Xi \sim \mathcal N(0, \sigma^2 I_n).
\end{align}
Let $G_\theta = G \colon \R^n \times \R^d \to \R^m$ be a conditional generative model 
trained to approximate the posterior distribution $P_{X|Y=y}$ using the latent random variable $Z \in \R^d$.
We will assume that all appearing measures are absolutely continuous and that the first moment of $G(y,\cdot)_{\#}P_Z$ is finite for all $y \in \R^n$. 
In particular, the posterior density is related via Bayes' theorem through the prior $p_X$ and the likelihood $p_{Y|X=x}$ as
$$p_{X|Y=y} \propto p_{Y|X=x} p_X,$$
where $\propto$ means quality up to a multiplicative normalization constant. Since the posterior density $p_{X|Y=y}$ is only almost everywhere unique, we choose the unique continuous representative.
Further, we assume that the negative log-likelihood $- \log p_{Y|X=x}$ is bounded from below with respect to $x$,
i.e., $\inf_x - \log p_{Y|X=x} > - \infty$ and that the prior has finite second moment. In particular, this includes 
mixtures of additive and multiplicative noise $Y = f(X) + \Xi_1 + \Xi_2 f(X)$, if $X$, $\Xi_1$ and $\Xi_2$ are independent, or log-Poisson noise commonly arising in computerized tomography.

We will use the Wasserstein-1 distance \citep{villani2009optimal}, which is a metric on the space of probability measures with finite first moment and is defined 
 for measures $\mu$ and $\nu$ on the space $\R^m$ as
 \begin{align}
W_1(\mu,\nu) = \inf_{\pi \in \Pi (\mu, \nu)} \int_{\R^m \times \R^m} \Vert x - y \Vert d\pi(x,y),
\end{align}
where $\Pi (\mu,\nu)$ contains all measures on $\R^m \times \R^m$ with $\mu$ and $\nu$ as its marginals.
The Wasserstein distance can be also rewritten by its dual formulation \citep[Remark 6.5]{villani2009optimal} as
\begin{align} \label{eq:w1_dual}
W_1(\mu, \nu) = \max_{\mathrm{Lip}(\varphi) \leq 1} \int \varphi(x) \dx (\mu-\nu)(x).
\end{align}

First, we show the local Lipschitz continuity of our generating measures $G(y,\cdot)_\# P_Z$ with respect to $y$, where we assume a local boundedness of the Jacobian of the generator with respect to the observation.

\begin{lemma}[Local Lipschitz continuity of generator] \label{gen_cont}
For all $r>0$, there exists some $L_r >0$ such that for any parameterized family of generative models $G$ with
$\Vert \nabla_y G(y,z) \Vert \leq L_r$ for all $z \in \mathrm{supp}(P_Z)$ and all $y \in \R^n$ with $\Vert y \Vert \le r$ it holds 
$$W_1(G(y_1,\cdot)_{\#} P_Z, G(y_2,\cdot)_{\#} P_Z) \leq L_r \Vert y_1-y_2 \Vert$$ for all $y_1,y_2 \in \R^n$ with $\Vert y_1 \Vert, \Vert y_2 \Vert \le r$.
\end{lemma}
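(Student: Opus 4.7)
The plan is to use the natural coupling induced by sharing the latent variable, then reduce the claim to a pointwise mean value inequality on each fiber. Concretely, I would first observe that the map $z \mapsto (G(y_1,z), G(y_2,z))$ pushes $P_Z$ forward to a coupling
\begin{align*}
\pi := (G(y_1,\cdot), G(y_2,\cdot))_{\#} P_Z \in \Pi(G(y_1,\cdot)_{\#} P_Z, G(y_2,\cdot)_{\#} P_Z),
\end{align*}
so the primal definition of $W_1$ immediately gives
\begin{align*}
W_1\bigl(G(y_1,\cdot)_{\#} P_Z, G(y_2,\cdot)_{\#} P_Z\bigr) \;\le\; \int_{\R^d} \Vert G(y_1,z) - G(y_2,z)\Vert \, \dx P_Z(z).
\end{align*}

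Next, I would bound the integrand pointwise in $z \in \supp(P_Z)$ via the fundamental theorem of calculus along the straight-line path $\gamma(t) = y_1 + t(y_2 - y_1)$, $t \in [0,1]$. The crucial (and trivial) observation is that the Euclidean ball $\{y \in \R^n : \Vert y\Vert \le r\}$ is convex, so the hypotheses $\Vert y_1\Vert, \Vert y_2\Vert \le r$ force $\Vert \gamma(t)\Vert \le r$ for all $t \in [0,1]$. Hence the assumed Jacobian bound $\Vert \nabla_y G(\gamma(t), z)\Vert \le L_r$ is available along the entire segment, and the mean value inequality (with operator norm) yields
\begin{align*}
\Vert G(y_1,z) - G(y_2,z) \Vert \;=\; \Bigl\Vert \int_0^1 \nabla_y G(\gamma(t), z)(y_2 - y_1)\, \dx t \Bigr\Vert \;\le\; L_r \Vert y_1 - y_2 \Vert
\end{align*}
for every $z \in \supp(P_Z)$.

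Finally, I would integrate this uniform-in-$z$ bound against $P_Z$ and use that $P_Z$ is a probability measure to conclude $W_1(G(y_1,\cdot)_{\#} P_Z, G(y_2,\cdot)_{\#} P_Z) \le L_r \Vert y_1 - y_2\Vert$. There is no real obstacle here: the argument is essentially a one-line mean value estimate lifted through an optimal transport coupling, with the only mild subtlety being that the differentiability of $G$ in $y$ is implicit in the hypothesis $\Vert \nabla_y G\Vert \le L_r$, and that convexity of the ball is what guarantees the bound applies uniformly along the interpolation path. The finite first moment assumption on $G(y,\cdot)_{\#} P_Z$ ensures all quantities are well-defined but does not enter quantitatively.
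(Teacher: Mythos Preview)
Your proof is correct and follows essentially the same strategy as the paper: both arguments reduce to the pointwise mean value estimate $\Vert G(y_1,z)-G(y_2,z)\Vert \le L_r\Vert y_1-y_2\Vert$ along the convex segment in the ball, then lift it to a $W_1$ bound. The only cosmetic difference is that you use the primal definition via the explicit coupling $(G(y_1,\cdot),G(y_2,\cdot))_{\#}P_Z$, whereas the paper uses the Kantorovich--Rubinstein dual formulation and the $1$-Lipschitz property of the test function $\varphi$; these are interchangeable here and neither buys anything over the other.
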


\begin{proof}
We use the mean value theorem which yields for every $z \in \mathrm{supp}(P_Z)$ and all $y_1,y_2 \in \R^n$ with $\Vert y_1 \Vert, \Vert y_2 \Vert \le r$
\begin{align*}
\Vert G(y_1,z) - G(y_2,z) \Vert 
&= \Big\Vert \int_0^1 \nabla_y G (y_1 + t(y_2 - y_1),z) ( y_1 - y_2) \dx t  \Big\Vert\\
&\le  \int_0^1 \Vert \nabla_y G (y_1 + t(y_2 - y_1),z) \Vert \dx t   \Vert y_1 - y_2   \Vert \\
&\le L_r \Vert y_1 - y_2  \Vert.
\end{align*}
Next, we apply the dual formulation of the Wasserstein-1 distance to estimate
\begin{align*}
W_1(G(y_1,\cdot)_{\#} P_Z, G(y_2,\cdot)_{\#} P_Z)  &= \max_{\mathrm{Lip}(\varphi) \le 1} \E_{z \sim P_Z} [\varphi ( G (y_1,z)) -  \varphi (G(y_2,z))] \\
&\le \max_{\mathrm{Lip}(\varphi) \le 1} \E_{z \sim P_Z} [\vert \varphi ( G (y_1,z) )  - \varphi ( G(y_2,z)) \vert] \\
&\le \E_{z \sim P_Z} [\Vert G(y_1,z) - G(y_2,z) \Vert] \\
&\le L_r \Vert y_1 - y_2 \Vert. 
\end{align*}
\end{proof}

We want to highlight that there is a trade-off between regularity of the generator due to a small Lipschitz constant \citep{GFPC2021,MKKY2018} and expressivity of the generator requiring a large Lipschitz constant \citep{Hagemann_2021,SBDD2022}.

\begin{remark}
If $P_Z$ is supported on a compact set, then the assumption 
in Lemma~\ref{gen_cont} is fulfilled for generators which are, e.g., continuously differentiable and then it follows from the extreme value theorem. 
Note that if we choose $P_Z$ to be a Gaussian distribution, then it holds $\mathrm{supp}(P_Z) = \R^d$. Thus, for continuously differentiable generators 
it is not clear that this assumption is fulfilled, but at least the weaker assumption $\Vert \nabla_y G(y,z) \Vert \leq L_r$ for all $z \in \R^d$ with $\Vert z \Vert \le \tilde r$ and all $y \in \R^n$ with $\Vert y \Vert \le r$ holds true. In this case, we can show that Lemma~\ref{gen_cont} holds true up to an arbitrary small additive constant, see Appendix~\ref{app:infinite_support} for more details.
\end{remark}

By the following lemma, which is just \citep[Corollary 19]{Sprungk_2020} for Euclidean spaces, 
the local Lipschitz continuity of the posterior distribution
with respect to the Wasserstein-1 distance is guaranteed under the assumption of a locally Lipschitz likelihood.

\begin{lemma}[Local Lipschitz continuity of the posterior] \label{post_cont}
Let the forward operator $f$ and the likelihood $p_{Y|X=x}$ in \eqref{inverse_prob} be measurable. 
Assume that there exists a function $M \colon[0,\infty)\times \mathbb{R} \to [0,\infty )$ 
which is monotone in the first component and non-decreasing in the second component 
such that for all $y_1,y_2 \in \R^n$ with $\Vert y_1 \Vert , \Vert y_2 \Vert \le r$ for $r>0$ 
and for all $x \in \R^m$ it holds
\begin{align} \label{eq:likelihood_bounded}
\vert \log p_{Y|X=x}(y_2) - \log p_{Y|X=x}(y_1) \vert \le M (r,\Vert x \Vert ) \Vert y_1 - y_2 \Vert.
\end{align}
Furthermore, assume that $M(r,\Vert \cdot \Vert ) \in L^2_{P_X} (\R^m, \mathbb{R})$. 
Then, for any $r > 0$ there exists a constant $C_r < \infty$ such that for all $y_1,y_2 \in \R^n$ 
with $\Vert y_1 \Vert , \Vert  y_2 \Vert \le r$ we have
\begin{align*}
W_1(P_{X|Y=y_1}, P_{X|Y=y_2}) \le C_r \Vert y_1-y_2 \Vert.
\end{align*}
\end{lemma}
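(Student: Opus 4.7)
The plan is to combine the dual formulation of the Wasserstein-1 distance with Bayes' theorem, and then exploit the assumed Lipschitz bound on $\log p_{Y|X=x}$ together with the integrability hypothesis on $M(r,\|\cdot\|)$.

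First, I would use \eqref{eq:w1_dual} to write
\begin{align*}
W_1(P_{X|Y=y_1},P_{X|Y=y_2}) = \max_{\mathrm{Lip}(\varphi) \le 1} \int \varphi(x)\, \dx(P_{X|Y=y_1} - P_{X|Y=y_2})(x),
\end{align*}
and restrict to 1-Lipschitz test functions $\varphi$ with $\varphi(0)=0$, so that $|\varphi(x)| \le \|x\|$. Next, by Bayes' theorem,
\begin{align*}
\dx P_{X|Y=y}(x) = \frac{p_{Y|X=x}(y)}{Z(y)}\, \dx P_X(x), \qquad Z(y) \coloneqq \int p_{Y|X=x'}(y)\, \dx P_X(x').
\end{align*}
Substituting this and using the triangle inequality yields a pointwise bound of the form
\begin{align*}
\left| \frac{p_{Y|X=x}(y_1)}{Z(y_1)} - \frac{p_{Y|X=x}(y_2)}{Z(y_2)} \right| \le \frac{|p_{Y|X=x}(y_1)-p_{Y|X=x}(y_2)|}{Z(y_1)} + \frac{p_{Y|X=x}(y_2)}{Z(y_1) Z(y_2)}|Z(y_1)-Z(y_2)|.
\end{align*}

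Next, I would convert the assumed Lipschitz bound on $\log p_{Y|X=x}$ into a Lipschitz bound on the likelihood itself. Using the elementary inequality $|e^a - e^b| \le \max(e^a,e^b)|a-b|$ together with \eqref{eq:likelihood_bounded}, I obtain
\begin{align*}
|p_{Y|X=x}(y_1) - p_{Y|X=x}(y_2)| \le \max\bigl(p_{Y|X=x}(y_1),p_{Y|X=x}(y_2)\bigr)\, M(r,\|x\|)\, \|y_1 - y_2\|,
\end{align*}
and integrating against $\dx P_X$ gives the analogous bound $|Z(y_1) - Z(y_2)| \le \|y_1 - y_2\| \int \max_i p_{Y|X=x}(y_i) M(r,\|x\|)\, \dx P_X(x)$. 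To turn this into usable constants, I would use the assumption that $-\log p_{Y|X=x}$ is bounded from below to show that $Z(y)$ is controlled above uniformly for $\|y\|\le r$, and a companion argument (choosing a fixed $x_0$ on which $p_{Y|X=x_0}$ is positive and continuous, or using the mean-value argument on $\log Z$ as in \citet{Sprungk_2020}) to bound $Z(y)$ uniformly away from zero on the ball $\|y\|\le r$.

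Putting everything together, the integrand against $\varphi$ is bounded by $\|x\|\cdot M(r,\|x\|)$ times terms that are uniformly controlled on $\|y_i\|\le r$. Applying Cauchy–Schwarz with respect to $P_X$ and using $M(r,\|\cdot\|) \in L^2_{P_X}$ (together with the finite first moment implicit in the Wasserstein framework, extended to a second moment control as needed via the same integrability assumption) gives a constant $C_r$ multiplying $\|y_1-y_2\|$, independent of $\varphi$, which completes the proof.

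The main obstacle I expect is the simultaneous handling of the two sources of dependence on $y$ in Bayes' formula: the numerator $p_{Y|X=x}(y)$ and the normalizing constant $Z(y)$. The normalizing constant must be bounded away from zero in a locally uniform way on $\{\|y\|\le r\}$, which is exactly where the lower bound on $-\log p_{Y|X=x}$ assumed earlier is essential; and the differences must be coupled with $\|x\|$ (coming from $|\varphi(x)| \le \|x\|$) in a way that remains integrable, which is why the $L^2_{P_X}$ hypothesis on $M(r,\|\cdot\|)$ (rather than mere $L^1$) is invoked.
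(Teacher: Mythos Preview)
The paper does not supply its own proof of this lemma: it is stated as a direct specialization of \citet[Corollary~19]{Sprungk_2020} to Euclidean spaces, with no argument given. So there is no ``paper's approach'' to compare against beyond the cited reference.

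Your sketch is precisely the strategy used in \citet{Sprungk_2020}: express $W_1$ via the Kantorovich--Rubinstein duality, write the posterior via Bayes' formula, split the difference into a numerator term and a normalizing-constant term, transfer the Lipschitz bound from $\log p_{Y|X=x}$ to $p_{Y|X=x}$ via $|e^a-e^b|\le \max(e^a,e^b)|a-b|$, and close with Cauchy--Schwarz using $M(r,\|\cdot\|)\in L^2_{P_X}$. You also correctly identify the two delicate points: the locally uniform positive lower bound on $Z(y)$, and the need for $\|x\|\,M(r,\|x\|)\in L^1_{P_X}$ after pairing with $|\varphi(x)|\le\|x\|$. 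One small caution: the Cauchy--Schwarz step needs $\|\cdot\|\in L^2_{P_X}$, i.e., a finite second moment of the prior, which is part of Sprungk's standing assumptions but is only implicit in the present paper's setup (the Wasserstein framework by itself guarantees only a first moment). Apart from making that hypothesis explicit, your outline matches the cited proof and is correct.
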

The Lipschitz constants of the family of generative models and the posterior distributions $P_{X|Y=y}$ can be related to each other under some convergence assumptions. Let the assumptions of Lemma~\ref{post_cont} be fulfilled, assume further that there exists a family of generative models $(G^\varepsilon)_\varepsilon$ satisfying
\begin{align*}
\lim_{\varepsilon \rightarrow 0} G^{\varepsilon}(y,\cdot)_{\#} P_Z = P_{X|Y=y}
\end{align*}
with respect to the $W_1$-distance and consider observations $y_1,y_2 \in \R^n$ with $\Vert y_1 \Vert, \Vert y_2 \Vert \le r$.
Then, by the triangle inequality it holds
\begin{align*}
\lim_{\varepsilon \rightarrow 0} W_1(G^{\varepsilon}(y_1,\cdot)_{\#}P_Z,G^{\varepsilon}(y_2,\cdot)_{\#}P_Z) 
&\leq \lim_{\varepsilon \rightarrow 0}  W_1(G^{\varepsilon}(y_1,\cdot)_{\#}P_Z,P_{X|Y = y_1}) +
W_1(P_{X|Y = y_1},P_{X|Y = y_2}) \\
&\quad +  W_1(P_{X|Y = y_2}, G^{\varepsilon}(y_2,\cdot)_{\#}P_Z) \\
&= W_1(P_{X|Y = y_1},P_{X|Y = y_2}) \\
&\leq C_r \Vert y_1 - y_2 \Vert.
\end{align*}
Hence, under the assumption of convergence, we expect the Lipschitz constant of our conditional generative models to behave similar to the one of the posterior distribution. 

\begin{remark}
The assumption \eqref{eq:likelihood_bounded} is for instance fulfilled 
for additive Gaussian noise $\Xi \sim \mathcal{N}(0, \sigma^2 \mathrm{Id})$. 
In this case 
\begin{align*}
-\log p_{Y|X=x}(y)= 
\frac{n}{2} \log (2 \pi \sigma^2)+ \frac{1}{2\sigma^2} \Vert y - f(x) \Vert^2.
\end{align*}
Hence $-\log p_{Y|X=x}(y)$
is differentiable with respect to $y$ and we get local Lipschitz continuity of the negative log-likelihood. 
\end{remark}

Now we can prove our main theorem
which ensures  pointwise bounds on the Wasserstein distance between posterior and generated measure, if the expectation over $P_Y$ is small. In particular, the bound depends on the local Lipschitz constant of the conditional generator with respect to the observation which may depend on the architecture of the generative model, the local Lipschitz constant of the inverse problem, the expectation over $P_Y$ and the probability of the considered observation $\Tilde{y}$. We want to highlight that the bound depends on the evidence $p_Y(\tilde y)$ of an observation $\tilde y$ and indicates that we generally cannot expect a good pointwise estimate for out-of-distribution observations, i.e., $p_Y(y) \approx 0$. This is in agreement with the empirical results presented in \citet{robust_cond_ood}. The proof of Theorem~\ref{thm:posterior_bound} has a nice geometric interpretation, which is visualized in Figure~\ref{fig:sketch_proof}.

\begin{theorem} \label{thm:posterior_bound}
Let the forward operator $f$ and the likelihood $p_{Y|X = x}$ in \eqref{inverse_prob} fulfill the assumptions of Lemma \ref{post_cont}. 
Let $\tilde{y} \in \R^n$ be an observation with $p_Y(\Tilde{y}) = a > 0$. 
Further, assume that $y \mapsto p_Y(y)$ is differentiable with $\Vert \nabla p_Y(y) \Vert \leq K$ for $K>0$ and all $y \in \R^n$. 
For fixed $k = \frac{a}{2K} + \Vert \tilde y \Vert$, 
assume that we have trained a family of generative models $G$ which fulfills 
$\Vert \nabla_y G(y,z) \Vert \leq L_k$ for all $z \in \mathrm{supp}(P_Z)$ and all $y \in \R^n$ with $\Vert y \Vert \le k$ for some $L_k > 0$
such that 
\begin{align} \label{eq:expectation_W1}
\mathbb{E}_{y \sim P_Y}  [ W_1(P_{X|Y=y}, G(y,\cdot)_{\#}P_Z) ] \leq \varepsilon
\end{align}
for some $\varepsilon > 0$. 
Then we have for $\varepsilon \le 1$ that
\begin{align} \label{estimate}
W_1(P_{X|Y=\tilde{y}}, G(\tilde{y},\cdot)_{\#}P_Z) \leq (L_{k} + C_{k}) \frac{\varepsilon^{\frac{1}{n+1}}a}{2 K}  + \frac{2 \varepsilon^{\frac{1}{n+1}}}{S_n (\frac{a}{2 K})^n a},
\end{align}
where $S_n \coloneqq \pi^{\frac{n}{2}} / \Gamma (\frac{n}{2} + 1)$ and 
$C_\bullet$ is the Lipschitz constant from Lemma \ref{post_cont}.
If $\varepsilon$ additionally satisfies 
$\varepsilon 
\le 
\left( \frac{a}{2K}\right)^{n+1} \frac{(L_{k}+C_{k})S_n a}{2n}$,
then it holds
\begin{align} \label{estimate1}
W_1(P_{X|Y=\tilde{y}}, G(\tilde{y},\cdot)_{\#}P_Z) &
\le 
(L_{k}+  C_{k}) ^{1-\frac{1}{n+1}} (1 + \frac1n) \left( \frac{2n}{S_n a}\right)^\frac{1}{n+1} \varepsilon^\frac{1}{n+1}.
\end{align}

\end{theorem}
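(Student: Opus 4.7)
The plan is to reduce the pointwise estimate to a volume-versus-mass argument on a small ball around $\tilde y$. Abbreviate $\delta \coloneqq W_1(P_{X|Y=\tilde y}, G(\tilde y,\cdot)_{\#} P_Z)$, $A \coloneqq L_k + C_k$ and $\rho \coloneqq \tfrac{a}{2K}$, so that $k = \rho + \|\tilde y\|$ and $\overline{B_\rho(\tilde y)} \subset \overline{B_k(0)}$. First, the gradient bound $\|\nabla p_Y\| \le K$ together with $p_Y(\tilde y) = a$ yields $p_Y(y) \ge a - K\rho = a/2$ on $B_\rho(\tilde y)$. Second, combining Lemma \ref{gen_cont} (with constant $L_k$) and Lemma \ref{post_cont} (with constant $C_k$) via the triangle inequality, the function $h(y) \coloneqq W_1(P_{X|Y=y}, G(y,\cdot)_{\#} P_Z)$ is $A$-Lipschitz on $\overline{B_\rho(\tilde y)}$. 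These two facts form the ``sandwich'' that drives the argument.

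For any radius $r_0 \in (0,\rho]$, I claim the key inequality
\[
\delta \;\le\; A r_0 + \frac{2\varepsilon}{a S_n r_0^n}.
\]
If $\delta \le A r_0$, this is immediate. Otherwise, the Lipschitz bound for $h$ gives $h(y) \ge \delta - A r_0 > 0$ on all of $B_{r_0}(\tilde y)$, and then
\[
\varepsilon \;\ge\; \int_{B_{r_0}(\tilde y)} h(y)\, p_Y(y)\, \dx y \;\ge\; \frac{a}{2}(\delta - A r_0)\, S_n r_0^n,
\]
using that the Euclidean volume of a ball of radius $r_0$ in $\R^n$ is $S_n r_0^n$. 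Rearranging delivers the claim.

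The two estimates of the theorem are then just two choices of $r_0$ in the above inequality. For \eqref{estimate}, the sub-optimal but convenient pick $r_0 = \rho\,\varepsilon^{1/(n+1)}$ is admissible because $\varepsilon \le 1$, and turns the two terms into $A\rho\,\varepsilon^{1/(n+1)}$ and $\tfrac{2\varepsilon^{1/(n+1)}}{a S_n \rho^n}$, matching the claimed bound termwise. For \eqref{estimate1}, I minimise the right-hand side over $r_0 > 0$; the stationary radius is $r_0^\star = \bigl(\tfrac{2n\varepsilon}{aS_n A}\bigr)^{1/(n+1)}$, and the stationarity condition $A = \tfrac{2n\varepsilon}{aS_n (r_0^\star)^{n+1}}$ rewrites the tail term as $\tfrac{A r_0^\star}{n}$, so the minimum equals $(1+\tfrac{1}{n})\,A r_0^\star$, which simplifies to the right-hand side of \eqref{estimate1}. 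The admissibility constraint $r_0^\star \le \rho$ is exactly the extra smallness hypothesis on $\varepsilon$ stated in the theorem.

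I expect the only subtle step to be identifying the correct scale $\rho = a/(2K)$ together with the case split that produces the key inequality: once one sees that a constant lower bound $h \ge \delta - A r_0$ on $B_{r_0}(\tilde y)$, rather than its finer linear refinement, already yields exactly the advertised constants, the rest is one-variable optimisation and bookkeeping.
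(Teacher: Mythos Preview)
Your proof is correct and follows essentially the same approach as the paper: a density lower bound on $B_{r_0}(\tilde y)$, Lipschitz continuity of $h$ via Lemmas~\ref{gen_cont} and~\ref{post_cont}, and the same key inequality $\delta \le A r_0 + \tfrac{2\varepsilon}{a S_n r_0^n}$ followed by the two radius choices. The only cosmetic difference is that the paper extracts a single point $\hat y \in B_{r_0}(\tilde y)$ with $h(\hat y) \le \tfrac{2\varepsilon}{S_n r_0^n a}$ by contradiction and then transports to $\tilde y$ via Lipschitz, whereas you lower-bound $h$ on the whole ball by $\delta - A r_0$ and integrate directly; both routes produce the identical inequality and constants.
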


\begin{proof}
Let $ 0 < r \le \frac{a}{2 K}$. 
Then, for $y \in B_r(\tilde{y})$, there exists by the mean value theorem some $\xi \in \overline{y \tilde y}$ such that
\begin{align*}
|p_Y(y) - p_Y(\tilde{y}) | \le \Vert \nabla p_Y(\xi) \Vert  \Vert y - \Tilde{y} \Vert \le K r \le  \frac{a}{2}.
\end{align*}
Consequently, each $y \in B_r(\tilde{y})$ has at least probability $p_Y(y) \geq \frac{a}{2}$. 
Moreover, by the volume of the $n$-dimensional ball it holds that 
\begin{align}
P_Y(B_r(\tilde{y})) = \int_{B_r (\tilde y)}  p_Y(y) \dx y 
\ge  \frac{\pi^{\frac{n}{2}}}{\Gamma (\frac{n}{2} + 1)} r^n \frac{a}{2} = S_n r^n \frac{a}{2}.
\end{align}
Now we claim that there exists $\widehat y \in B_r(\tilde{y})$ with 
\begin{align} \label{eq:bound_W1_yhat}
W_1(P_{X|Y=\widehat y}, G(\widehat y, \cdot)_{\#}P_Z) \leq \frac{2 \varepsilon}{S_n r^n a}.
\end{align}
If this would not be the case, this would imply a contradiction to \eqref{eq:expectation_W1} by
\begin{align*}
\mathbb{E}_{y \sim P_Y}  [ W_1(P_{X|Y=y}, G(y,\cdot)_{\#}P_Z) ]  &= \int_{\mathbb{R}^n} W_1(P_{X|Y=y}, G(y,\cdot)_{\#}P_Z) \dx P_Y(y)  \\
&\geq \int_{B_r({\Tilde{y}})} W_1(P_{X|Y=y}, G(y,\cdot)_{\#}P_Z) \dx P_Y(y) \\
&> \int_{B_r({\Tilde{y}})} \frac{2 \varepsilon}{S_n r^n a} \dx P_Y(y) \\
&= P_Y(B_r(\Tilde{y})) \frac{2 \varepsilon}{S_n r^n a} 
\ge \varepsilon.
\end{align*}
Next, we show the local Lipschitz continuity of $y \mapsto W_1(P_{X|Y=y}, G(y,\cdot)_{\#}P_Z)$ on $B_r (\tilde{y})$ 
by combining Lemma~\ref{gen_cont} and Lemma~\ref{post_cont}. 
Let $y_1,y_2 \in B_r(\tilde y)$, so that  $\Vert y_1 \Vert, \Vert y_2 \Vert \le \Vert \tilde y \Vert + r$.
Let $L_{\Vert \tilde y \Vert + r}>0$ be the local Lipschitz constant from Lemma~\ref{gen_cont} 
and $C_{\Vert \tilde y \Vert + r}$  the local Lipschitz constant from Lemma~\ref{post_cont}. 
Using the triangle inequality and its reverse, we get
\begin{align} \label{eq:localLipschitz_objective}
&\quad| W_1(P_{X|Y=y_1}, G(y_1,\cdot)_{\#} P_Z) - W_1(P_{X|Y=y_2}, G(y_2,\cdot)_{\#} P_Z) |  \nonumber \\
&\le | W_1(P_{X|Y=y_1}, G(y_1,\cdot)_{\#} P_Z) - W_1(P_{X|Y=y_1}, G(y_2,\cdot)_{\#} P_Z) | \nonumber \\ 
&\quad + | W_1(P_{X|Y=y_1}, G(y_2,\cdot)_{\#} P_Z) - W_1(P_{X|Y=y_2}, G(y_2,\cdot)_{\#} P_Z) | \\ 
&\le W_1(G(y_1,\cdot)_{\#} P_Z, G(y_2,\cdot)_{\#} P_Z) + W_1(P_{X|Y=y_1}, P_{X|Y=y_2}) \nonumber \\
&\le (L_{\Vert \tilde y \Vert + r} + C_{\Vert \tilde y \Vert + r}) \Vert y_1 - y_2 \Vert \nonumber .
\end{align}
Combination of the results in \eqref{eq:bound_W1_yhat} and \eqref{eq:localLipschitz_objective} yields the estimate
\begin{align*}
W_1(P_{X|Y=\tilde{y}}, G(\tilde{y},\cdot)_{\#}P_Z) &\le |W_1(P_{X|Y=\tilde{y}}, G(\tilde{y},\cdot)_{\#}P_Z) - W_1(P_{X|Y=\widehat y}, G(\widehat y,\cdot)_{\#}P_Z) | \\
&\quad + | W_1(P_{X|Y=\widehat y}, G(\widehat y,\cdot)_{\#}P_Z) | \\
& \leq (L_{\Vert \tilde y \Vert + r} + C_{\Vert \tilde y \Vert + r}) r + \frac{2 \varepsilon}{S_n r^n a} \\
& \leq (L_{\Vert \tilde y \Vert + \frac{a}{2K}} + C_{\Vert \tilde y \Vert + \frac{a}{2 K}}) r + \frac{2 \varepsilon}{S_n r^n a}.
\end{align*}
If $\varepsilon \le 1$, we can choose
$r = \varepsilon^{\frac{1}{n+1}} \frac{a}{2K} \le \frac{a}{2K}$ which results in \eqref{estimate}. 
On the other hand, the radius $r$, for which the right-hand side becomes minimal, is given by 
$$r = \Big(\frac{2 n \varepsilon}{(L_{\Vert \tilde y \Vert + \frac{a}{2K}} + C_{\Vert \tilde y \Vert + \frac{a}{2 K}}) S_n a} \Big)^{\frac{1}{n+1}}.$$
Plugging this in, we get \eqref{estimate1}, which has the same asymptotic rate..
However, we need that $r \le \frac{a}{2K}$ which implies 
$$
\varepsilon 
\le 
\left( \frac{a}{2K}\right)^{n+1} \frac{(L_{\Vert \tilde y \Vert + \frac{a}{2K}}+C_{\Vert \tilde y \Vert + \frac{a}{2 K}})S_n a}{2n}.
$$
\end{proof}

\begin{figure}[t!]
\centering
\begin{subfigure}[t]{.65\textwidth}
\includegraphics[width=\linewidth]{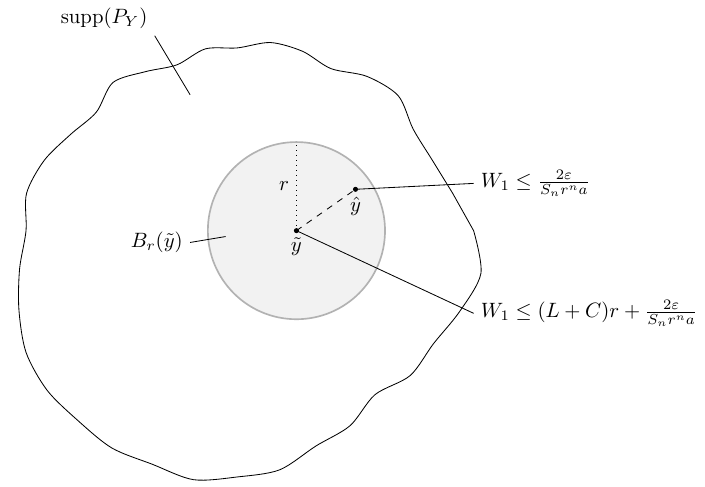}
\end{subfigure}
\caption{Geometric interpretation of the proof of Theorem~\ref{thm:posterior_bound}. Inside the ball $B_r(\tilde y)$ we can find some $\hat y$, for which the Wasserstein distance $W_1 (P_{X|Y=\hat{y}},G(\hat{y},\cdot)_{\#}P_Z)$ is bounded by $\frac{2\varepsilon}{S_n r^n a}$. Using the regularity of the generator and of the inverse problem, the Wasserstein distance $W_1 (P_{X|Y=\tilde{y}},G(\tilde{y},\cdot)_{\#}P_Z)$ can be bounded by the triangle inequality.}
\label{fig:sketch_proof}
\end{figure}

Note that in \eqref{eq:expectation_W1} we assume that the expectation over $P_Y$ of the Wasserstein distance is small. When training a generator, usually a finite training set is available. The measure $P_Y$ can be approximated by the empirical training set with a rate $n^{-1/d}$ on compact sets, where $n$ is the size of the training set and $d$ the dimension, see, e.g., \cite{WB2019}.

\begin{remark}
We can get rid of the dimension scaling $\varepsilon^{\frac{1}{n+1}}$ by choosing the radius as $r = \frac{a}{2K}$, which yields
\begin{align*}
W_1(P_{X|Y=\tilde{y}}, G(\tilde{y},\cdot)_{\#}P_Z) \leq (L_{\Vert \tilde y \Vert + \frac{a}{2K}} + C_{\Vert \tilde y \Vert + \frac{a}{2 K}}) \frac{a}{2K} + \frac{2 \varepsilon}{S_n (\frac{a}{2 K})^n a}.
\end{align*}
This comes at the disadvantage that the first term is constant with respect to $\varepsilon$.
\end{remark}

The following corollary provides a characterization of a perfect generative model. 
If the expectation \eqref{eq:expectation_W1} goes to zero, then for all $y \in \R^n$ with $p_Y (y) > 0$ 
the posteriors $P_{X|Y=y}$ get predicted correctly.

\begin{corollary}
\label{cor:zero}
Let the assumptions of Lemma~\ref{gen_cont} and Lemma~\ref{post_cont} hold true and assume a global Lipschitz constant in Lemma~\ref{gen_cont}. Let $p_Y$ be differentiable with $\Vert \nabla p_Y(y) \Vert \le K$ for some $K>0$ and all $y \in \R^n$. Consider a family of generative networks $(G^\varepsilon)_{\varepsilon >0}$ fulfilling 
\begin{align}
\mathbb{E}_{y \sim P_Y}  [ W_1(P_{X|Y=y}, G^{\varepsilon}(y,\cdot)_{\#}P_Z) ] \le \varepsilon
\end{align}
and assume that the Lipschitz constants $L^\varepsilon$ of $G^\varepsilon$ from Lemma~\ref{gen_cont} are bounded by some $L < \infty$. Then for all observations $y \in \R^n$ with $p_Y(y) > 0$ it holds 
\begin{align}
W_1(P_{X|Y=y}, G^{\varepsilon}(y,\cdot)_{\#}P_Z) \rightarrow 0 \quad \text{as} ~ \varepsilon \to 0.
\end{align}
\end{corollary}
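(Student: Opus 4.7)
The plan is to apply Theorem~\ref{thm:posterior_bound} pointwise to each $G^{\varepsilon}$ for the fixed observation $y$ and let $\varepsilon \to 0$. The corollary is essentially a direct consequence of the quantitative estimate \eqref{estimate}, so the main task is to check that every constant appearing on the right-hand side remains finite and independent of $\varepsilon$ once $y$ is fixed.

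More concretely, I would fix an observation $y \in \R^n$ with $p_Y(y) = a > 0$ and set $k = \tfrac{a}{2K} + \|y\|$. By hypothesis, the Lipschitz constants $L^{\varepsilon}$ from Lemma~\ref{gen_cont} are uniformly bounded by $L < \infty$, so the local constant $L_k^\varepsilon$ may be replaced by $L$ for every $\varepsilon > 0$. The constant $C_k$ from Lemma~\ref{post_cont} depends only on $f$, $p_{Y|X=x}$, and $k$, hence only on $y$ (and not on $\varepsilon$). Since $\mathbb{E}_{y \sim P_Y}[W_1(P_{X|Y=y}, G^{\varepsilon}(y,\cdot)_{\#}P_Z)] \le \varepsilon$ and $\varepsilon \le 1$ eventually, Theorem~\ref{thm:posterior_bound} applied to $G^{\varepsilon}$ with $\tilde y = y$ yields
\begin{align*}
W_1(P_{X|Y=y}, G^{\varepsilon}(y,\cdot)_{\#}P_Z)
\le (L + C_k) \frac{\varepsilon^{\frac{1}{n+1}} a}{2K} + \frac{2\,\varepsilon^{\frac{1}{n+1}}}{S_n \bigl(\tfrac{a}{2K}\bigr)^n a}.
\end{align*}

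The right-hand side is a fixed constant (depending on $a$, $K$, $L$, $C_k$, $n$) times $\varepsilon^{1/(n+1)}$, so it tends to zero as $\varepsilon \to 0$, which is the claim. The only step that requires some care is verifying that the global Lipschitz assumption together with the uniform bound on $L^\varepsilon$ allows us to use a single constant in Theorem~\ref{thm:posterior_bound} across the whole family $(G^{\varepsilon})_{\varepsilon > 0}$; once this is observed, no genuine obstacle remains and the conclusion follows by letting $\varepsilon \to 0$.
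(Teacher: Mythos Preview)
Your proposal is correct and follows exactly the paper's approach: the paper's own proof simply notes that one may assume $\varepsilon \le 1$ and then invokes Theorem~\ref{thm:posterior_bound}. Your write-up is in fact more detailed than the paper's one-line argument, spelling out why the constants on the right-hand side of \eqref{estimate} are independent of $\varepsilon$.
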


\begin{proof}
We can assume that $\varepsilon \le 1$, then the statement follows immediately from Theorem~\ref{thm:posterior_bound}.
\end{proof}

Finally, we can use Theorem~\ref{thm:posterior_bound} for error bounds of adversarival attacks on Bayesian inverse problems.
Following the concurrent work of \citet{glockler2023adversarial}, an adversarial attack on a conditional generative model consists in finding a perturbation $\delta$ to the observation $\tilde{y}$ so that the prediction of the conditional generative model is as far away as possible from the true posterior, i.e.,
\begin{align*}
\delta = \argmax_{\Vert \delta \Vert \leq B} W_1(P_{X|Y = \tilde{y}},G(\tilde{y}+\delta,\cdot )_{\#} P_Z).
\end{align*}
Note that \citet{glockler2023adversarial} use the KL divergence for their adversarial attack, but for our theoretical analysis the Wasserstein distance is more suitable. 

The following corollary yields a worst case estimate on the possible attack. In the case of imperfect trained conditional generative models the attack can be very powerful depending on the strength of observation and the Lipschitz constant of the generator. If the conditional generative model is trained such that the expectation in \eqref{eq:expectation_W1} is small, then the attack can only be as powerful as the Lipschitz constant of the inverse problem allows. 
\begin{corollary}
Let the forward operator $f$ and the likelihood $p_{Y|X = x}$ in \eqref{inverse_prob} fulfill the assumptions of Lemma \ref{post_cont}. 
Let $\tilde{y} \in \R^n$ and $\delta \in \mathbb{R}^n$ with $\Vert \delta \Vert \leq B$ be an observation with $p_Y(\Tilde{y} + \delta) = a > 0$. 
Further, assume that $y \mapsto p_Y(y)$ is differentiable with $\Vert \nabla p_Y(y) \Vert \leq K$ for $K>0$ and all $y \in \R^n$. 
For fixed $k = \frac{a}{2K} + \Vert \tilde y \Vert + B$,
assume that we have trained a family of generative models $G$ which fulfills 
$\Vert \nabla_y G(y,z) \Vert \leq L_k$ for all $z \in \mathrm{supp}(P_Z)$ and all $y \in \R^n$ with $\Vert y \Vert \le k$ for some $L_k > 0$
such that 
\begin{align}
\mathbb{E}_{y \sim P_Y}  [ W_1(P_{X|Y=y}, G(y,\cdot)_{\#}P_Z) ] \leq \varepsilon
\end{align}
for some $0 < \varepsilon \le 1$. Then we have the following control on the strength of the adversarial attack
\begin{align*}
W_1(P_{X|Y = \tilde{y}},G(\tilde{y}+\delta,\cdot )_{\#} P_Z) \leq C_{\Vert \tilde{y} \Vert + B} B + (L_{k} + C_{k}) \frac{\varepsilon^{\frac{1}{n+1}}a}{2 K}  + \frac{2 \varepsilon^{\frac{1}{n+1}}}{S_n (\frac{a}{2 K})^n a}.
\end{align*}
\end{corollary}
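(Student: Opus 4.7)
The plan is to split the quantity of interest by the triangle inequality in the natural way and then apply the two main results already established:
\begin{align*}
W_1(P_{X|Y=\tilde{y}},\, G(\tilde{y}+\delta,\cdot)_{\#} P_Z)
&\le W_1(P_{X|Y=\tilde{y}},\, P_{X|Y=\tilde{y}+\delta}) \\
&\quad + W_1(P_{X|Y=\tilde{y}+\delta},\, G(\tilde{y}+\delta,\cdot)_{\#} P_Z).
\end{align*}
The first summand on the right is a posterior--to--posterior distance with perturbed observation, and the second summand is exactly the pointwise quantity estimated in Theorem~\ref{thm:posterior_bound}, with the single observation $\tilde{y}$ replaced by $\tilde{y}+\delta$.

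For the first summand, I would invoke Lemma~\ref{post_cont} with the radius $r = \Vert \tilde{y} \Vert + B$, since $\Vert \tilde{y} \Vert, \Vert \tilde{y}+\delta\Vert \le \Vert \tilde{y} \Vert + B$. This yields
\begin{align*}
W_1(P_{X|Y=\tilde{y}},\, P_{X|Y=\tilde{y}+\delta}) \le C_{\Vert \tilde{y} \Vert + B} \, \Vert \delta \Vert \le C_{\Vert \tilde{y} \Vert + B} \, B,
\end{align*}
which accounts for the first term in the claimed bound.

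For the second summand, I would apply Theorem~\ref{thm:posterior_bound} verbatim to the observation $\tilde{y}+\delta$, which has probability density $p_Y(\tilde{y}+\delta)=a>0$ by assumption. The only bookkeeping step is to verify that the generator's local Lipschitz assumption on the ball of radius $\tfrac{a}{2K} + \Vert \tilde{y}+\delta\Vert$ is covered by the hypothesis $\Vert \nabla_y G(y,z)\Vert \le L_k$ for $\Vert y\Vert \le k$; this holds because $\Vert \tilde{y}+\delta\Vert \le \Vert \tilde{y}\Vert + B$, so $\tfrac{a}{2K} + \Vert \tilde{y}+\delta\Vert \le \tfrac{a}{2K} + \Vert \tilde{y}\Vert + B = k$. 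Hence Theorem~\ref{thm:posterior_bound} (in its form \eqref{estimate}, since $\varepsilon\le 1$) directly gives
\begin{align*}
W_1(P_{X|Y=\tilde{y}+\delta}, G(\tilde{y}+\delta,\cdot)_{\#} P_Z) \le (L_{k} + C_{k}) \frac{\varepsilon^{\frac{1}{n+1}}a}{2 K}  + \frac{2\, \varepsilon^{\frac{1}{n+1}}}{S_n (\tfrac{a}{2K})^n a},
\end{align*}
and summing the two bounds yields the stated inequality.

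There is no real obstacle; the proof is a careful combination of Lemma~\ref{post_cont} and Theorem~\ref{thm:posterior_bound}. The only point that requires attention is the choice of the enlarged radius $k = \tfrac{a}{2K} + \Vert \tilde{y}\Vert + B$ in the hypothesis, which is precisely what allows the Lipschitz constants $L_k$ and $C_k$ produced by applying Theorem~\ref{thm:posterior_bound} at the shifted point $\tilde{y}+\delta$ to be uniformly controlled by the fixed constants stated in the corollary.
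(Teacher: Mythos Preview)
Your proposal is correct and follows exactly the same route as the paper: apply the triangle inequality, bound the first summand via Lemma~\ref{post_cont} and the second via Theorem~\ref{thm:posterior_bound} at the shifted observation $\tilde{y}+\delta$. In fact, you supply more detail than the paper's own proof, in particular the verification that $\tfrac{a}{2K}+\Vert \tilde{y}+\delta\Vert \le k$, which justifies using the constants $L_k$ and $C_k$.
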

\begin{proof}
Since it holds
\begin{align*}
W_1(P_{X|Y = \tilde{y}},G(\tilde{y}+\delta,\cdot )_{\#} P_Z) \le W_1(P_{X|Y = \tilde{y}},P_{X|Y=\tilde y + \delta}) + W_1(P_{X|Y=\tilde y + \delta},G(\tilde{y}+\delta,\cdot )_{\#} P_Z),
\end{align*}
the result follows from the application of Lemma~\ref{post_cont} and Theorem~\ref{thm:posterior_bound}.
\end{proof}

\section{Conditional Generative Models} \label{sec:cond_gen_models}
In this section, we discuss whether the main assumption, namely that the averaged Wasserstein distance $\mathbb{E}_{y \sim P_Y}  [ W_1(P_{X|Y=y}, G(y,\cdot)_{\#}P_Z) ]$ 
in \eqref{eq:expectation_W1} becomes small,
is reasonable for different conditional generative models. Therefore we need to relate the typical choices of training loss with the Wasserstein distance. For a short experimental verification in the case of conditional normalizing flows we refer to Appendix~\ref{app:experiment}.

In the following we  assume that the training loss of the corresponding models become small. This can be justified by universal approximation theorems like \citep{TITOIS2020,LCFCZGXC2022} for normalizing flows or \citep{LL2020} for GANs.

\subsection{Conditional Normalizing Flows}
\label{sec:cond_flow}
Conditional normalizing flows \citep{wpp_flows,graz_inc,ardizzone2019guided, winkler_cond_flows} are a family of normalizing flows parameterized by a condition, which in our case is the observation $y$. The aim is to learn a network $\mathcal{T} \colon \R^n \times \R^m \to \R^m$ such that $\mathcal{T}(y,\cdot)$ is a diffeomorphism and $\mathcal{T}(y,\cdot)_{\#}P_Z \approx  P_{X|Y=y}$ for all $y \in \R^n$, where $\approx$ means that two distributions are similar in some proper distance or divergence.
This can be done via minimizing the expectation on $Y$ of the \textit{forward} KL divergence $\mathbb{E}_{y \sim P_Y} [\mathrm{KL}(P_{X|Y=y},\mathcal{T}(y,\cdot)_{\#}P_Z)]$, which is equal, up to a constant, to
$$\mathbb{E}_{x \sim P_X, y \sim P_Y}[-\log p_Z(\mathcal{T}^{-1}(y,x))-\log (|\operatorname{det}D \mathcal{T}^{-1}(y,x)|)],$$ where the inverse is meant with respect to the second component, see \citet{Hagemann_2022} for more details. Training a network using the forward KL has many desirable properties like a mode-covering behaviour of $\mathcal{T}(y,\cdot)_{\#}P_Z$.  
Now conditional normalizing flows are trained using the KL divergence, while the theoretical bound in Section~\ref{sec:main} relies on the metric properties of the Wasserstein-1 distance.
Thus we need to show that we can ensure a small $\varepsilon$ in \eqref{eq:expectation_W1} when training the conditional normalizing flow as proposed.
Following \citet[Theorem 4]{on_choosing_and}, we can bound the Wasserstein distance by the total variation distance, which in turn is bounded by KL via Pinsker's inequality \citep{pinsker1963information}, i.e., 
\begin{align*}
\mathbb{E}_{y \sim P_y} [W_1(P_{X|Y=y},\mathcal{T}(y,\cdot)_{\#}P_Z)^2] &\leq
C\  \mathbb{E}_{y \sim P_Y} [\mathrm{TV}(P_{X|Y=y} ,\mathcal{T}(y,\cdot)_{\#}P_Z)^2] \\
&\leq \frac{C}{\sqrt{2}}\mathbb{E}_{y \sim P_Y}[ \mathrm{KL}(P_{X|Y=y},\mathcal{T}(y,\cdot)_{\#}P_Z)
],
\end{align*}
where $C$ is a constant depending on the support of the probability measures. However, by definition  $\mathrm{supp}(\mathcal{T}(y,\cdot)_{\#}P_Z) = \R^m$. By \citet[Lemma 4]{ADHHMS2022} the density $p_{\mathcal{T}(y,\cdot)_{\#}P_Z}$ decays exponentially. Therefore, we expect in practice that the Wasserstein distance becomes small if the KL vanishes even though \citep[Theorem 4]{on_choosing_and} is not applicable.

\subsection{Conditional Wasserstein GANs}
In Wasserstein GANs \citep{arjovsky2017wasserstein}, a generative adversarial network approach is taken in order to sample from a target distribution. For this, the dual formulation \eqref{eq:w1_dual} is used in order to calculate the Wasserstein distance between measures $P_X$ and $P_Y$. Then the 1-Lipschitz function is reinterpreted as a discriminator in the GAN framework \citep{GPMXWOCB2014}. If the corresponding minimizer in the space of 1-Lipschitz functions can be found, then optimizing the adversarial Wasserstein GAN loss directly optimizes the Wasserstein distance. The classical Wasserstein GAN loss for a target measure $\mu$ and a generator $G\colon\R^d \to \R^m$ is given by
$$\min_\theta  \max_{\mathrm{Lip}(\varphi) \leq 1} \mathbb{E}_{x \sim P_X, z \sim P_Z}[\varphi(x) - \varphi(G(z))],$$
where $d \in \mathbb N$ is the dimension of the latent space.

The Wasserstein GAN framework can be extended to conditional Wasserstein GANs \citep{adler_deep,liu2021wasserstein} for solving inverse problems. For this, we aim to train generators $G\colon \R^n \times \R^d \to \R^m$ and average with respect to the observations
$$ L(\theta) = \mathbb{E}_{y \sim P_y} \big[ \max_{\mathrm{Lip}(\varphi_y) \leq 1}  \mathbb{E}_{x \sim P_{X|Y=y}, z \sim P_Z}[\varphi_y(x) - \varphi_y(G(y,z))] \big].$$
Hence minimizing this loss (or a variant of it) directly  enforces a small $\varepsilon$ in assumption \eqref{eq:expectation_W1}.

\subsection{Conditional Diffusion Models}
In diffusion models, a forward SDE, which maps  a data distribution to an approximate Gaussian distribution is considered \citep{song2021maximum,song2021scorebased}. Then the theory of reverse SDEs \citep{ANDERSON1982313} allows to sample from the data distribution by learning the score $\nabla \log p_t(x)$, where $p_t(x)$ is the path density of the forward SDE. 
The forward SDE usually reads 
$$dX_t = - \alpha X_t \dx t + \sqrt{2 \alpha} \dx W_t,$$
while the reverse SDE is given by 
$$dY_t = - \alpha Y_t \dx t - 2\ \nabla \log p_t(x) \dx t + \sqrt{2 \alpha} \dx \tilde{W}_t,$$
where $\alpha \in \R$ describes the schedule of the SDE.
However, the path density $p_t(x)$ is usually intractable, so that the score $\nabla \log p_t(x)$ is learned with a NN $s_\theta\colon [0,T]\times \R^m \to \R^m$ such that $s_{\theta}(t,x) \approx \nabla \log p_t(x)$ for all $t \in [0,T]$ and $x \in \R^m$.
This can be ensured using the so-called score matching loss \citep{song2021scorebased} defined by
$$\min_\theta \mathbb{E}_{t \sim U([0,T]),x \sim P_{X_t}}\left[\Vert s_{\theta}(t,x) -\nabla \log p_t(x)\Vert^2\right].$$
In order to solve inverse problems, we can consider a conditional reverse SDE
\begin{align*}
dY_t = - \alpha Y_t \dx t - 2\ \nabla \log p_t(x|y) \dx t + \sqrt{2 \alpha} \dx \tilde{W}_t,
\end{align*}
where $p_t(x|y)$ is the conditional path density given an observation $y \in \R^n$. 
Consequently, we consider conditional diffusion models, where a NN $s_\theta \colon \R^n \times [0,T] \times \R^m \to \R^m$ is learned to approximate $s_{\theta}(y,t,x) \approx \nabla \log p_t(x|y)$ for all $t \in [0,T]$, $x \in \R^m$ and all observations $y \in \R^m$. Then the score matching loss for conditional diffusion models is given in \citet[Theorem 1]{cond_score} as
\begin{align} \label{eq:loss_score}
L(\theta) = \mathbb{E}_{y \sim P_Y} \big[ \mathbb E_{t \sim U([0,T]) ,x \sim P_{X_t|Y=y}} [\Vert s_{\theta}(y,t,x) -\nabla \log p_t(x|y)\Vert^2 ] \big].
\end{align}
Denote by $\Tilde{Y}$ the solution to the approximated SDE starting at $\tilde{Y}_0 \approx P_Z$ and $\tilde Y^y$ the solution of the approximated SDE conditioned on an observation $y \in \R^n$.
Then we can use the bound derived in \citet[Theorem 4]{pidstrigach2023infinitedimensional} which gives
$$\mathbb{E}_{y \sim P_Y}[W_2^2(P_{X|Y=y}, P_{\tilde{Y}_T^y})] \leq C\ \mathbb{E}_{y \sim P_Y} \big[ W_2^2\big(P_{X^y_T}, \mathcal{N}(0,\mathrm{Id})\big)\big] + C L(\theta) ,$$
where $C$ is a constant depending on the length of the interval $T$ and the Lipschitz constant of the conditional score $\nabla \log p_t(x|y)$, which we assume to be uniformly bounded (e.g. compact support of $P_Y$).
Finally, H\"olders inequality yields for the Wasserstein-1 distance
$$\mathbb{E}_{y \sim P_Y}[W_1(P_{X|Y=y}, P_{\tilde{Y}_T^y})]^2 \leq C\  \mathbb{E}_{y \sim P_Y} \big[ W_2^2\big(P_{X^y_T}, \mathcal{N}(0,\mathrm{Id})\big)\big] + C L(\theta).$$
Hence, when training the conditional diffusion model by minimizing \eqref{eq:loss_score} we also ensure that \eqref{eq:expectation_W1} becomes small. For more in depth discussion with less restrictive assumptions on the score, see also \citep{debortoli2022convergence}.

\subsection{Conditional Variational Autoencoder}

Variational Autoencoder (VAE) \citep{kingma2013auto} aim to approximate a distribution $P_X$ by learning a stochastic encoder $E_\phi \colon \R^m \to \R^d \times \R^{d,d}$ determining parameters of the normal distribution $(\mu_\phi(x),\Sigma_\phi(x))$ for $x$ sampled from $P_X$  and pushing $P_X$ to a latent distribution $P_Z$ with density $p_Z$ of dimension $d \in \mathbb N$. In the reverse direction, a stochastic decoder $D_\theta \colon \R^d \to \R^m \times \R^{m,m}$ determines parameters of the normal distribution $(\mu_\theta(z),\Sigma_\theta(z))$ for $z \in \R^d$  and pushes $P_Z$ back to $P_X$. By definition, the densities of $E_\phi$ and $D_\theta$ are given by $q_\phi(z|x) = \mathcal{N}(z;\mu_\phi(x),\Sigma_\phi(x))$ and $p_\theta(x|z) = \mathcal{N}(x;\mu_\theta(z),\Sigma_\theta(z))$, respectively.
These networks are trained by minimizing the so-called evidence lower bound (ELBO) 
\begin{align*}
\mathrm{ELBO}(\theta,\phi) =  - \mathbb E_{x \sim P_X}\big[ \mathbb E_{z \sim q_\phi(\cdot|x)} [\log (p_\theta(x|z)p_Z(z)) - \log(q_\phi(z|x)))] \big].
\end{align*}
By \citet[Theorem 4.1]{HHS2023}, the loss $L(\theta,\phi)$ is related to KL by
\begin{align}
\text{KL}(P_X,{D_\theta}_{\#}P_Z) \le \mathrm{ELBO}(\theta,\phi).
\end{align}

We can solve inverse problems by extending VAEs to conditional VAEs \citep{lim2018molecular,SLY2015} and aim to approximate the posterior distribution $P_{X|Y=y}$ for a given observation $y \in \R^n$. The conditional stochastic encoder $E_\phi \colon \R^n \times \R^m \to \R^d \times \R^{d,d}$ and conditional stochastic decoder $D_\theta \colon \R^n \times \R^d \to \R^m \times \R^{m,m}$ are trained by
\begin{align}
L(\theta,\phi) = \mathbb E_{y \sim P_Y} \big[ - \mathbb E_{x \sim P_{X|Y=y}}\big[ \mathbb E_{z \sim q_\phi(\cdot|y,x)} [\log (p_\theta(x|y,z)p_Z(z)) - \log(q_\phi(z|y,x)))] \big] \big].
\end{align}
By the same argument as above, the KL can be bounded by
\begin{align}
\mathbb E_{y \sim P_Y} [\text{KL}(P_{X|Y=y},D_{\theta}(y,\cdot)_{\#}P_Z)] \le L(\theta,\phi)
\end{align}
and, using similar arguments as in Section~\ref{sec:cond_flow}, we get the estimate
\begin{align}
\mathbb E_{y \sim P_Y} [W_1(P_{X|Y=y},D_\theta(y,\cdot)_{\#}P_Z)^2] \le \frac{C}{\sqrt{2}} L(\theta,\phi).
\end{align}

\section{Conclusion}

We showed a pointwise stability guarantee of the Wasserstein distance between the posterior $P_{X|Y=y}$ of a Bayesian inverse problem and the learned distribution $G(y,\cdot)_{\#}P_Z$ of a conditional generative model $G$ under certain assumptions. In particular, the pointwise bound depends on the Lipschitz constant of the conditional generator with respect to the observation, the Lipschitz constant of the inverse problem, the training loss with respect to the Wasserstein distance and the probability of the considered observation. 

 The required training accuracy of the bound depends on the Wasserstein-1 distance between the target distribution and the learned distribution. However, some conditional networks as the conditional normalizing flow are not trained to minimize the Wasserstein-1 distance. Consequently, a direct dependence of the bound on the training accuracy with respect to the KL divergence would be helpful. Under very strong assumptions, the continuity in Lemma \ref{gen_cont} has been shown for KL in \citet{BHKMS2023}. This could be used to derive a similar statement.
 
 Furthermore, our bound is a worst case bound and is not always practical if the constants are large. It would be interesting to check whether tightness of the bound can be shown for some examples.

\subsubsection*{Acknowledgement}
P.H. acknowledges funding by the German Research Foundation (DFG)  
within the project of the DFG-SPP 2298  ``Theoretical Foundations of Deep Learning''
and 
F.A. within project EF3-7 of Germany‘s
Excellence Strategy – The Berlin Mathematics Research Center MATH+.
The authors want to thank B. Sprungk for posing the question considered in this paper, namely
if there are guarantees that conditional generative NNs
work well for single observations.
Moreover, many thanks to J. Hertrich for fruitful discussions and for suggesting the example in the appendix.

\bibliography{refs}
\bibliographystyle{tmlr}

\appendix
\section{Example on the Robustness of the MAP and Posterior}\label{app:example}
We like to provide an example that illustrates the stability of the posterior distribution in contrast to the
MAP estimator and highlights the role of the MMSE estimator.

By the following lemma, see, e.g., \citep{GFO2017,HHS2023},
the posterior of a Gaussian mixture model given observations from a linear forward operator corrupted by white Gaussian noise
can be computed analytically.

\begin{lemma} \label{lem:mm}
Let $X \sim \sum_{k=1}^K w_k \mathcal N(m_k,\Sigma_k) \in \mathbb R^m$ be a Gaussian mixture random variable.
Suppose that 
$$
Y=A X+\Xi,
$$
where
$A: \R^m \rightarrow \R^n$ 
is a linear operator and $\Xi \sim N(0,\sigma^2 I_n)$. Then the posterior is also a Gaussian mixture
$$
P_{X|Y=y} \propto \sum_{k=1}^K \tilde w_k \mathcal N(\cdot|\tilde m_k,\tilde \Sigma_k)
$$
with
$$
\tilde \Sigma_k \coloneqq (\tfrac{1}{\sigma^2}A^\tT A+\Sigma_k^{-1})^{-1},
\qquad 
\tilde m_k \coloneqq \tilde\Sigma_k (\tfrac1{b^2}A^\tT y+\Sigma_k^{-1}\mu_k)
$$
and
$$
\tilde w_k \coloneqq \frac{w_k}{|\Sigma_k|^{1/2}} \exp\left(\frac12 (\tilde m_k \tilde \Sigma_k^{-1} \tilde m_k - m_k \Sigma_k^{-1} m_k)\right).
$$
\end{lemma}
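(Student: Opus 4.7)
The plan is to apply Bayes' theorem and then complete the square in $x$ componentwise. Since the likelihood is $p_{Y|X=x}(y)=\mathcal N(y\mid Ax,\sigma^2 I_n)$ and the prior is a Gaussian mixture, linearity of the sum yields
$$
p_{X|Y=y}(x)\;\propto\;p_{Y|X=x}(y)\,p_X(x)\;=\;\sum_{k=1}^K w_k\,\mathcal N(y\mid Ax,\sigma^2 I_n)\,\mathcal N(x\mid m_k,\Sigma_k),
$$
where $\propto$ hides a $y$-only normalization constant. It therefore suffices to rewrite each summand, for fixed $k$, as an $x$-independent factor times $\mathcal N(x\mid\tilde m_k,\tilde\Sigma_k)$.

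The $x$-dependent part of the log of the $k$-th summand is quadratic in $x$ with Hessian $\tfrac{1}{\sigma^2}A^{\tT}A+\Sigma_k^{-1}$ and linear coefficient $\tfrac{1}{\sigma^2}A^{\tT}y+\Sigma_k^{-1}m_k$. With $\tilde\Sigma_k$ and $\tilde m_k$ defined as in the statement, a direct completion of the square rewrites the full exponent of the product of the two Gaussians as
$$
-\tfrac12(x-\tilde m_k)^{\tT}\tilde\Sigma_k^{-1}(x-\tilde m_k)\;+\;\tfrac12 \tilde m_k^{\tT}\tilde\Sigma_k^{-1}\tilde m_k\;-\;\tfrac12 m_k^{\tT}\Sigma_k^{-1}m_k\;-\;\tfrac{1}{2\sigma^2}\|y\|^2.
$$
Exponentiating and restoring the Gaussian normalization constants identifies the $x$-dependent factor as $\mathcal N(x\mid\tilde m_k,\tilde\Sigma_k)$ and yields exactly the stated weight structure $\tilde w_k\propto w_k\exp\bigl(\tfrac12(\tilde m_k^{\tT}\tilde\Sigma_k^{-1}\tilde m_k - m_k^{\tT}\Sigma_k^{-1}m_k)\bigr)$.

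The only substantive task is bookkeeping: one must track which residual factors are absorbed into the $y$-only $\propto$ and which become part of the weights. The $k$-independent prefactor $(2\pi\sigma^2)^{-n/2}\exp(-\tfrac{1}{2\sigma^2}\|y\|^2)$ coming from the likelihood is safely absorbed into $\propto$, but the ratio $\bigl(\det\tilde\Sigma_k/\det\Sigma_k\bigr)^{1/2}$ arising from the two Gaussian normalizations is $k$-dependent and strictly speaking belongs in $\tilde w_k$; a faithful statement of the lemma should include this factor in $\tilde w_k$, and I would flag this in the write-up alongside two apparent typos ($b^2$ should read $\sigma^2$ and $\mu_k$ should read $m_k$ in the formula for $\tilde m_k$). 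Modulo this housekeeping, the proof is the standard Gaussian conjugacy calculation.
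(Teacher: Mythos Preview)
The paper does not give its own proof of this lemma; it merely cites \cite{GFO2017,HHS2023} and states the result. Your proposal is the standard Gaussian-conjugacy derivation (Bayes' theorem plus completing the square componentwise), and it is correct.

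Your bookkeeping observations are also on target. The typos $b^2\to\sigma^2$ and $\mu_k\to m_k$ are evident, and your point about the missing determinant ratio is valid: after completing the square and re-normalizing to $\mathcal N(x\mid\tilde m_k,\tilde\Sigma_k)$, the $k$-th weight picks up the factor $\bigl(\det\tilde\Sigma_k/\det\Sigma_k\bigr)^{1/2}$, which in general depends on $k$ and therefore cannot be absorbed into the $\propto$. In the one-dimensional example the paper subsequently works out, all $\Sigma_k$ are equal (both components have variance $\varepsilon^2$), so all $\tilde\Sigma_k$ coincide as well and this determinant ratio is $k$-independent; hence the omission does not affect any of the computations actually used in the paper. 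It would still be worth noting the correction in your write-up.
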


Now, for some small $\varepsilon > 0$ we consider  
the random variable $X \in \mathbb R$ with  simple prior distribution 
$$P_X = \frac{1}{2} \mathcal{N}(-1,\varepsilon^2) + \frac{1}{2} \mathcal{N}(1,\varepsilon^2)$$
and observations from
$Y=X+\Xi$ 
with noise $\Xi \sim \mathcal{N}(0,\sigma^2)$. 
The  MAP estimator is given by
\begin{align}
x_{\mathrm{MAP}}(y) &\in \argmax_x p_{X|Y=y} (x) \\
&=
\argmin_x \frac{1}{2 \sigma^2} ( y - x )^2 - \log  \big( \frac{1}{2} (  e^{-\frac{1}{2 \varepsilon^2} ( x - 1 )^2} 
+  e^{-\frac{1}{2 \varepsilon^2} ( x + 1 )^2} )\big) \\
&= 
\argmin_x \frac{1}{2 \sigma^2} ( y - x )^2  + \frac{1}{2 \varepsilon^2} (x^2 + 1) - \log \left( \cosh \left(\frac{x}{\varepsilon^2} \right) \right).
\end{align}
The above minimization problem has a unique global minimizer for $y \not = 0$ which we computed numerically.
Figure \ref{fig:map_mmse_estimator} (top) shows the plot of the function $x_{\mathrm{MAP}}(y)$ for $\varepsilon^2 = 0.05^2$
and different values of $\sigma$. Clearly, small perturbations of $y$ near zero lead to qualitatively completely different $x$-values,
where a smaller noise level $\sigma$ lowers the distance between the values $x_{\mathrm{MAP}}(y)$ for $y>0$ and $y<0$.
In other words, the MAP estimator is not robust with respect to perturbations of the observations near zero.

\begin{figure*}[t]
\centering
\begin{subfigure}{.3\textwidth}
  \includegraphics[width=\linewidth]{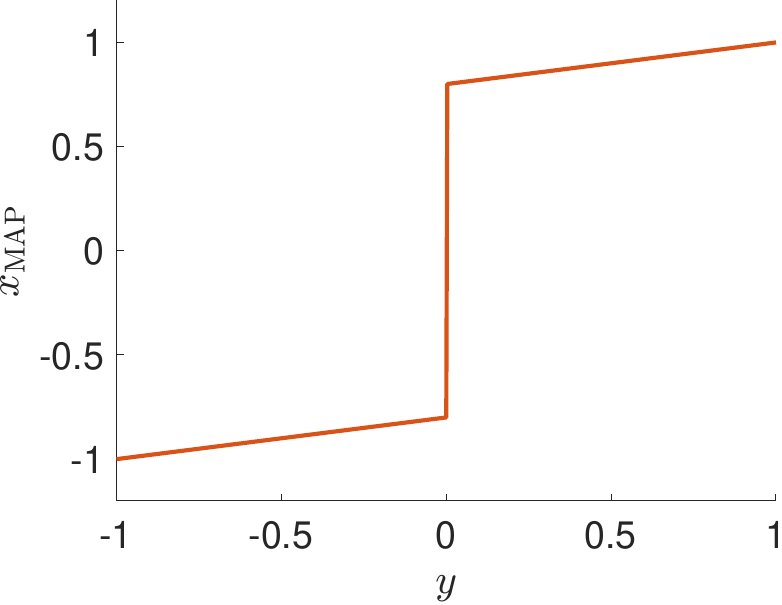}
\end{subfigure}%
\hspace{0.2cm}
\begin{subfigure}{.3\textwidth}
  \includegraphics[width=\linewidth]{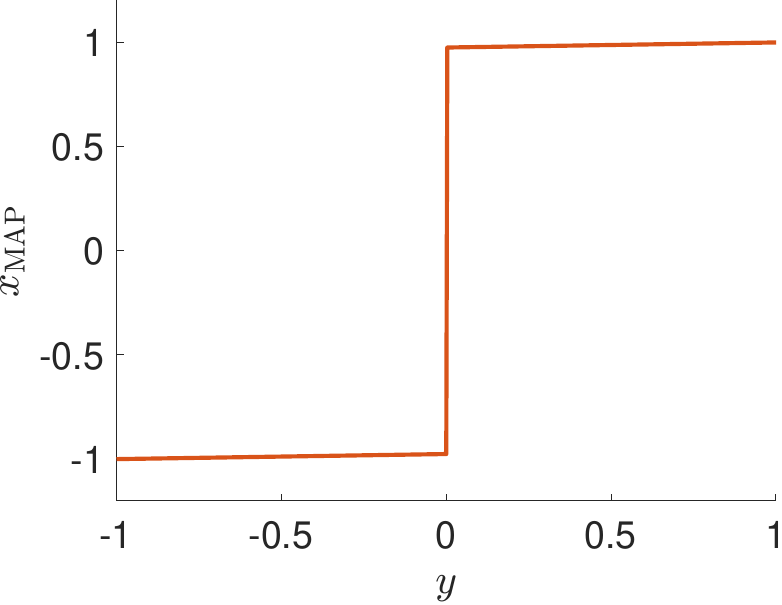}
\end{subfigure}%
\hspace{0.2cm}
\begin{subfigure}{.3\textwidth}
  \includegraphics[width=\linewidth]{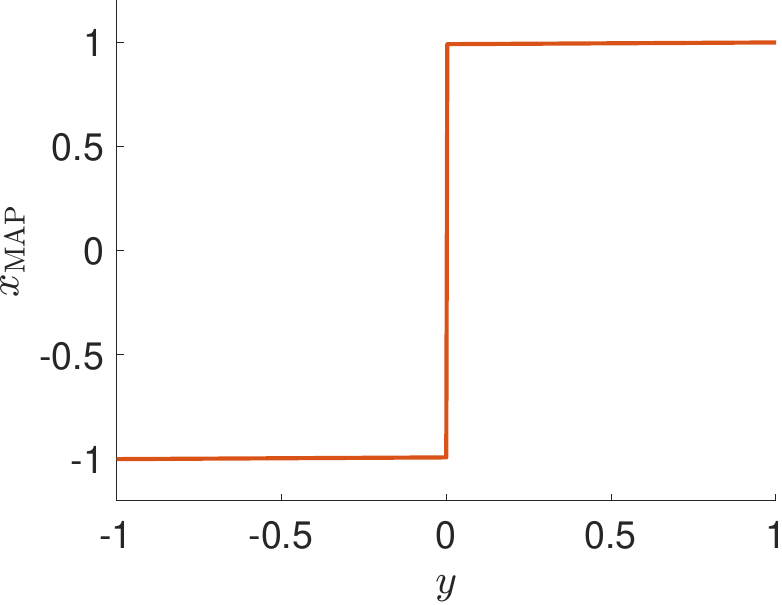}
\end{subfigure}%

\begin{subfigure}{.3\textwidth}
  \includegraphics[width=\linewidth]{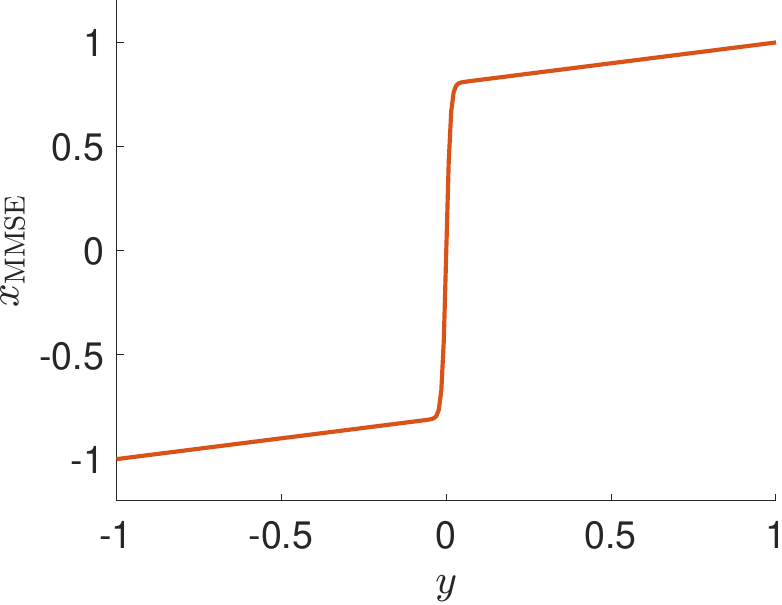}
  \caption*{$\sigma^2 = 0.01$}
\end{subfigure}%
\hspace{0.2cm}
\begin{subfigure}{.3\textwidth}
  \includegraphics[width=\linewidth]{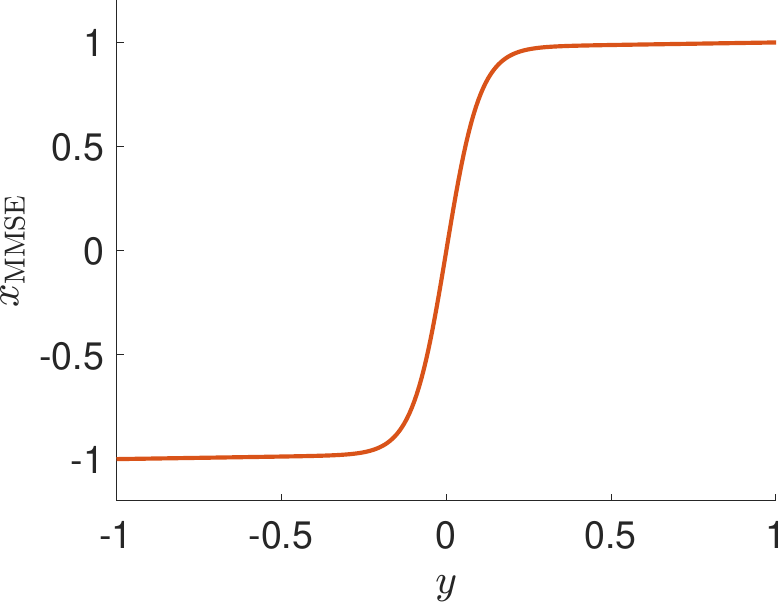}
  \caption*{$\sigma^2 = 0.1$}
\end{subfigure}%
\hspace{0.2cm}
\begin{subfigure}{.3\textwidth}
  \includegraphics[width=\linewidth]{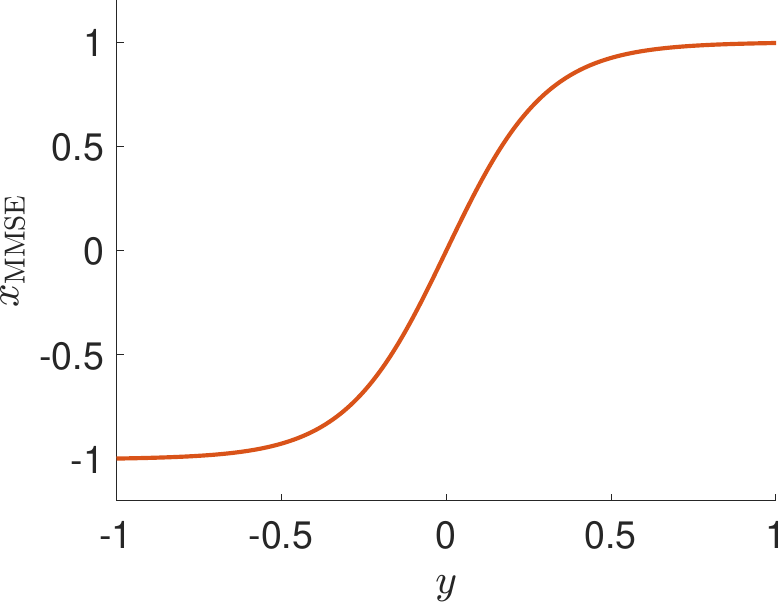}
  \caption*{$\sigma^2 = 0.3$}
\end{subfigure}%
\caption{
The MAP estimator (top) and the MMSE estimator (bottom) with respect to the observation $y$ for $\varepsilon^2 = 0.05^2$  and
different noise levels $\sigma^2$.
} \label{fig:map_mmse_estimator}
\end{figure*}

In contrast, using Lemma \ref{lem:mm}, we can compute the posterior 
\begin{align}
P_{X|Y=y} = \frac{1}{\tilde w_1 + \tilde w_2} (\tilde w_1 \mathcal{N}(\cdot | \tilde m_1 , \tilde \sigma^2) + \tilde w_2 \mathcal{N}(\cdot | \tilde m_2 \tilde \sigma^2) )
\end{align}
with 
\begin{align}
\tilde \sigma^2 &= \frac{\sigma^2 \varepsilon^2}{\sigma^2 + \varepsilon^2}, \quad 
\tilde m_1 = \frac{\varepsilon^2 y + \sigma^2}{\varepsilon^2 + \sigma^2}, \quad \tilde m_2 = \frac{\varepsilon^2 y - \sigma^2}{\varepsilon^2 + \sigma^2},
\\
\tilde w_1 &= \frac{1}{2 \varepsilon} \exp \Big(\frac{1}{2\varepsilon^2} \Big( \frac{(\varepsilon^2 y + \sigma^2)^2}{\sigma^2  (\varepsilon^2 + \sigma^2)} - 1 \Big) \Big), \quad 
\tilde w_2 = \frac{1}{2 \varepsilon} \exp \Big(\frac{1}{2\varepsilon^2} \Big( \frac{(\varepsilon^2 y - \sigma^2)^2}{\sigma^2  (\varepsilon^2 + \sigma^2)} - 1 \Big) \Big) .
\end{align}
Then the MMSE estimator is given by the expectation value of the posterior
\begin{align}
x_{\mathrm{MMSE}}(y) 
&= \argmin_{T} \mathbb E_{(x,y) \sim P_{(X,Y)}} \|x - T(y)\|^2  =  \mathbb E[X|Y=y] \label{loss_MMSE}\\
&= \int_{\mathbb R} x p_{X|Y = y} (x) \, \text{d} x\\
&= \frac{1}{\tilde w_1 + \tilde w_2} (\tilde w_1 \tilde m_1 + \tilde w_2 \tilde m_2)\\
&=
\frac{1}{\tilde w_1 + \tilde w_2}\frac{1}{\varepsilon ( \varepsilon^2 + \sigma^2)} e^{\frac{\varepsilon^2 y^2 - \sigma^2}{2 \sigma^2 (\varepsilon^2 + \sigma^2)} } \big(\varepsilon^2 y \cosh (\frac{y}{\varepsilon^2 + \sigma^2}) + \sigma^2 \sinh ( \frac{y}{\varepsilon^2 + \sigma^2}) \big).
\end{align}
In Figure \ref{fig:map_mmse_estimator} (bottom), we see that the MMSE estimator shows a smooth transition in particular for larger noise levels,
meaning that the estimator is robust against small perturbations of the observation near zero.
Note that in case of a Gaussian prior $X \sim \mathcal N (m, \Sigma)$ in $\mathbb R^m$ and white Gaussian noise, 
the MAP and MMSE estimators coincide.

\section{Local Lipschitz continuity of the generator for a latent space with infinite support} \label{app:infinite_support}

Here we show a weakened version of Lemma~\ref{gen_cont} leading to an arbitrary small additive constant. The main difference is the weaker assumption
 $\Vert \nabla_y G(y,z) \Vert \leq L_r$ for all $z \in \R^d$ with $\Vert z \Vert \le \tilde r$ and all $y \in \R^n$ with $\Vert y \Vert \le r$,
which is fulfilled for continuously differentiable generators. 
For this we use the so-called truncated normal distribution \citep{Horrace2005,Tallis1963}. Let 
$p_Z$ 
be the density of the standard normal distribution $P_Z = \mathcal{N}(0,I_n)$, then the density of the truncated normal distribution $P_Z^{\tilde r}$ is given by
\begin{align}
p_Z^{\tilde r} (z) = 
\begin{cases}
\frac{p_Z(z)}{\int_{B_{\tilde r}(0)} p_Z(z) \dx z}  = \frac{p_Z(z)}{C_{\tilde{r}}}, \quad &\text{if}~ \Vert z \Vert \le \tilde{r}, \\
0, &\text{else.}
\end{cases}
\end{align}

\begin{lemma}
Let $P_Z = \mathcal{N}(0,I_n)$ be the latent space. For any parameterized family of generative models $G$ with 
$\Vert \nabla_y G(y,z) \Vert \leq L_r$ for all $z \in \R^d$ with $\Vert z \Vert \le \tilde r$ and all $y \in \R^n$ with $\Vert y \Vert \le r$ for some $L_r > 0$ and some $r>0$, it holds 
$$W_1(G(y_1,\cdot)_{\#} P_Z, G(y_2,\cdot)_{\#} P_Z) \leq L_r \Vert y_1-y_2 \Vert + M_{\tilde r}$$ for all $y_1,y_2 \in \R^n$ with $\Vert y_1 \Vert, \Vert y_2 \Vert \le r$. The additive constant $M_{\tilde r}$ fulfills $M_{\tilde r} \to 0$ for $\tilde r \to \infty$.
\end{lemma}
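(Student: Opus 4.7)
The plan is to insert the truncated normal $P_Z^{\tilde r}$ as an intermediate measure so that the transport cost splits into an interior piece supported in $\overline{B_{\tilde r}(0)}$, where the hypothesis $\|\nabla_y G(y,z)\| \le L_r$ applies and the argument of Lemma~\ref{gen_cont} carries through verbatim, plus a tail piece that can be made arbitrarily small by exploiting the standing assumption that $G(y,\cdot)_{\#}P_Z$ has finite first moment.

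Applying the triangle inequality for $W_1$ yields
\begin{align*}
W_1(G(y_1,\cdot)_{\#} P_Z, G(y_2,\cdot)_{\#} P_Z)
&\le W_1(G(y_1,\cdot)_{\#} P_Z, G(y_1,\cdot)_{\#} P_Z^{\tilde r}) \\
&\quad + W_1(G(y_1,\cdot)_{\#} P_Z^{\tilde r}, G(y_2,\cdot)_{\#} P_Z^{\tilde r}) \\
&\quad + W_1(G(y_2,\cdot)_{\#} P_Z^{\tilde r}, G(y_2,\cdot)_{\#} P_Z).
\end{align*}
Since $\supp P_Z^{\tilde r} \subseteq \overline{B_{\tilde r}(0)}$, the middle term is bounded by $L_r \Vert y_1 - y_2 \Vert$ by repeating the mean-value/dual-formulation computation in the proof of Lemma~\ref{gen_cont}. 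For each outer term, use the mixture decomposition $P_Z = C_{\tilde r}\, P_Z^{\tilde r} + (1-C_{\tilde r})\,\nu_{\tilde r}$, where $\nu_{\tilde r}$ is the normalized restriction of $P_Z$ to $B_{\tilde r}(0)^c$, and form the explicit coupling
\begin{align*}
\gamma_{\tilde r} = C_{\tilde r}\,(\mathrm{id},\mathrm{id})_{\#}P_Z^{\tilde r} + (1-C_{\tilde r})\,\nu_{\tilde r}\otimes P_Z^{\tilde r}
\end{align*}
of $P_Z$ and $P_Z^{\tilde r}$. Pushing $\gamma_{\tilde r}$ forward by $(G(y,\cdot), G(y,\cdot))$, the diagonal part contributes zero, and after introducing a reference point $z_0$ (say $z_0=0$) and the triangle inequality $\Vert G(y,z) - G(y,z')\Vert \le \Vert G(y,z) - G(y,z_0)\Vert + \Vert G(y,z') - G(y,z_0)\Vert$, each outer distance is bounded above by
\begin{align*}
\int_{B_{\tilde r}(0)^c} \Vert G(y,z) - G(y,z_0)\Vert\, dP_Z(z) + \frac{1-C_{\tilde r}}{C_{\tilde r}}\int_{B_{\tilde r}(0)} \Vert G(y,z) - G(y,z_0)\Vert\, dP_Z(z).
\end{align*}

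Both summands go to $0$ as $\tilde r \to \infty$: the first by dominated convergence using the finite-first-moment hypothesis, and the second because $C_{\tilde r} \to 1$. Setting $M_{\tilde r}$ equal to twice the supremum of this $y$-dependent bound over $\Vert y\Vert \le r$, and combining with the middle-term estimate, proves the lemma. The main obstacle will be justifying the uniformity of this convergence: the tail integral depends on $y$, so to define a single $M_{\tilde r}$ that works simultaneously for $y_1$ and $y_2$ one needs either a uniform control of the form $\int \sup_{\Vert y\Vert\le r}\Vert G(y,z) - G(y,z_0)\Vert\, dP_Z(z) < \infty$ (which can be extracted from continuous differentiability of $G$ together with local boundedness on compacta in $y$) or an additional explicit integrability assumption, so that the dominated convergence theorem still yields $M_{\tilde r} \to 0$ as $\tilde r \to \infty$.
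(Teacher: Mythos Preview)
Your proof is correct and follows the same overall scaffold as the paper: triangle inequality with the truncated Gaussian $P_Z^{\tilde r}$ as an intermediate, then Lemma~\ref{gen_cont} for the middle term. The only substantive difference is how you bound the two outer terms $W_1(G(y,\cdot)_\# P_Z, G(y,\cdot)_\# P_Z^{\tilde r})$. The paper uses the dual (Kantorovich--Rubinstein) formulation and splits the integral of $\varphi(G(y,z))\,p_Z(z)$ over $B_{\tilde r}(0)$ and its complement, arguing that the tail piece vanishes by the exponential decay of $p_Z$ and that the interior piece vanishes because $C_{\tilde r}\to 1$. You instead build an explicit coupling from the mixture decomposition $P_Z = C_{\tilde r}P_Z^{\tilde r} + (1-C_{\tilde r})\nu_{\tilde r}$ and reduce to a tail first-moment integral. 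Your route is arguably the more careful one: the paper's dual argument is silent on why $\sup_{\mathrm{Lip}(\varphi)\le 1}\int_{B_{\tilde r}^c}\varphi(G(y,z))\,p_Z(z)\,dz\to 0$ (since $\varphi$ is unbounded, this needs exactly the finite-first-moment input you invoke explicitly after centering at $z_0$), and you are also more explicit in flagging the uniformity-in-$y$ issue needed to define a single $M_{\tilde r}$, which the paper leaves implicit.
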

\begin{proof}
Let $y_1,y_2 \in \R^n$ with $\Vert y_1 \Vert, \Vert y_2 \Vert \le r$, then it holds
\begin{align}
W_1(G(y_1,\cdot)_{\#} P_Z, G(y_2,\cdot)_{\#} P_Z) &\le W_1(G(y_1,\cdot)_{\#} P_Z, G(y_1,\cdot)_{\#} P_Z^{\tilde r}) 
+ W_1(G(y_1,\cdot)_{\#} P_Z^{\tilde r}, G(y_2,\cdot)_{\#} P_Z^{\tilde r}) \\
&\quad + W_1(G(y_2,\cdot)_{\#} P_Z^{\tilde r}, G(y_2,\cdot)_{\#} P_Z).
\end{align}
By the assumption on the generator $G$, Lemma~\ref{gen_cont} yields
\begin{align}
W_1(G(y_1,\cdot)_{\#} P_Z^{\tilde r}, G(y_2,\cdot)_{\#} P_Z^{\tilde r}) \le L_r \Vert y_1 - y_2 \Vert. 
\end{align}
Consequently, it suffices to show that for $y \in \R^n$ with $\Vert y \Vert \le r$ the term 
$W_1(G(y\,\cdot)_{\#} P_Z, G(y,\cdot)_{\#} P_Z^{\tilde r})$
vanishes for ${\tilde r} \to \infty$. By definition, it holds
\begin{align}
W_1(G(y,\cdot)_{\#} P_Z, G(y,\cdot)_{\#} P_Z^{\tilde r}) &= \max_{\mathrm{Lip}(\varphi)\le1} \int_{\R^d} \varphi(G(y,z)) \dx P_Z(z) - \int_{\R^d} \varphi(G(y,z)) \dx P_Z^{\tilde r}(z) \\
&= \max_{\mathrm{Lip}(\varphi)\le1} \int_{\R^d \setminus B_{\tilde r}(0)} \varphi(G(y,z)) \dx P_Z(z) + \int_{B_{\tilde r}(0)} \varphi(G(y,z)) \dx P_Z(z) \\
&\phantom{\max_{\mathrm{Lip}(\varphi)\le1}} \quad - \int_{B_{\tilde r}(0)} \varphi(G(y,z)) \dx P_Z^{\tilde r}(z) \\
&=\max_{\mathrm{Lip}(\varphi)\le1} \int_{\R^d \setminus B_{\tilde r}(0)} \varphi(G(y,z)) p_Z(z) \dx z \\
&\phantom{\max_{\mathrm{Lip}(\varphi)\le1}} \quad + \int_{B_{\tilde r}(0)} \varphi(G(y,z)) p_Z(z) (1 - \frac{1}{C_{\tilde r}})\dx z.
\end{align}
The first term vanishes exponentially in ${\tilde r}$ by the density $p_Z$, and for the second term note that $C_{\tilde r} \to 1$ for ${\tilde r} \to \infty$.\end{proof}
 
\section{Experimental demonstration of assumption \eqref{eq:expectation_W1} } \label{app:experiment}

Here we experimentally demonstrate that the expectation in \eqref{eq:expectation_W1} gets small when training a conditional generative network. For this, we  consider a conditional normalizing flow as an example,  consisting of three and ten Glow coupling blocks\footnote{available at \url{https://github.com/VLL-HD/FrEIA}} with fully connected subnets with one hidden layer and 64 and 512 nodes, respectively. As in Appendix~\ref{app:example}, we choose a two-dimensional Gaussian mixture model with six modes as prior distribution and additive Gaussian noise with standard deviation 0.5 for the noise model. The forward operator is chosen to be the identity. Then we train the conditional normalizing flow for 100000 optimizer steps using Adam \cite{KB2015} with a learning rate of $1e-4$ and a batch size of 1024. Note that we can analytically compute the posterior distribution by Lemma~\ref{lem:mm}.

\begin{figure}[t!]
\centering
\begin{subfigure}[t]{.45\textwidth}
\includegraphics[width=\linewidth]{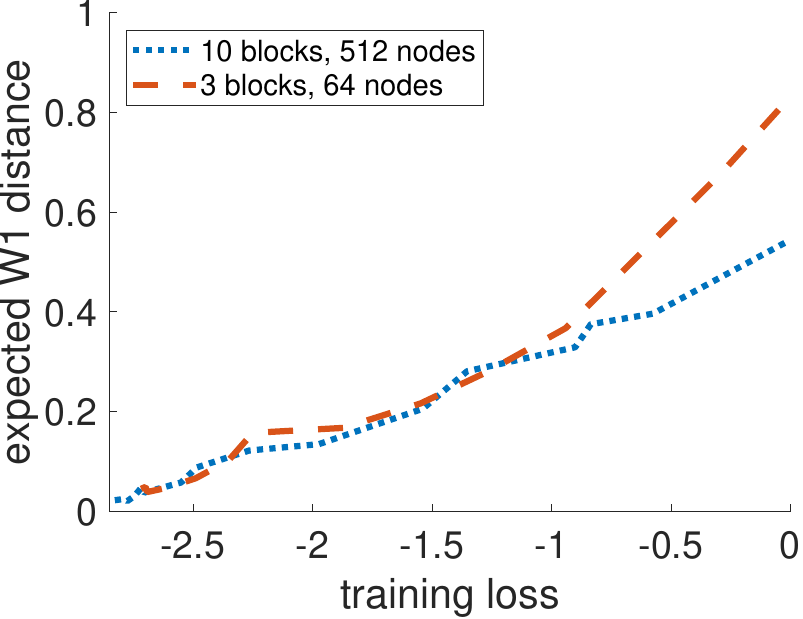}
\end{subfigure}
\caption{Expectation of the Wasserstein distance between posterior and pushforward of the generator with respect to the training loss of the conditional normalizing flow.}
\label{fig:W1vsKL}
\end{figure}

In Figure~\ref{fig:W1vsKL} we visualize the expectation in \eqref{eq:expectation_W1} with respect to the training loss of the conditional normalizing flow, which is, up to a constant, equal to the KL, see Section~\ref{sec:cond_flow}. Obviously, the expectation \eqref{eq:expectation_W1} gets small when minimizing the training loss of the conditional normalizing flow. We computed the Wasserstein distance using POT \footnote{available at \url{https://github.com/PythonOT/POT}} and draw 20000 samples from each distribution. Moreover, we discretized the expectation by drawing 30 observations.

\end{document}